\newcommand{\Bo}[1]{{\color{blue} [Bo: #1]}}
\newcommand{\estname}{{ALgorithm for policy Gradient from Arbitrary Experience via DICE}\xspace}
\newcommand{\estabb}{{AlgaeDICE}\xspace}
\title{
\estabb:
Policy Gradient \\ from Arbitrary Experience
}
\author{%
  \begin{tabular}{p{4.5cm}p{4.5cm}p{4.5cm}}
  {\hfil Ofir Nachum\thanks{Equal contribution.}}
    & {\hfil Bo Dai$^*$} & {\hfil Ilya Kostrikov\thanks{Also at NYU.}}\\[.1cm]
    \multicolumn{3}{c}{\hfil \normalfont \texttt{\{ofirnachum,~bodai,~kostrikov\}@google.com}}\\ [0.5cm]  
    {\hfil Yinlam Chow}
    & {\hfil Lihong Li} & {\hfil Dale Schuurmans\thanks{Also at University of Alberta.}}\\[.1cm]
    \multicolumn{3}{c}{\hfil \normalfont \texttt{\{yinlamchow,~lihong,~schuurmans\}@google.com}}\\ [.3cm]  
    \multicolumn{3}{c}{\normalfont Google Research}\\[0.1cm]
  \end{tabular} 
}
\date{}
\newcommand{\comment}[1]{}
\def\f{f}
\def\fdiv{D_{\f}}
\def\fstar{\f_*}
\def\E{\mathbb{E}}
\def\R{\mathbb{R}}
\def\qpi{Q_{\pi}}
\def\qhat{Q_\theta}
\def\jprimal{J_{\mathrm{P}}}
\def\jdual{J_{\mathrm{D}}}
\def\jcritic{J_{\mathrm{critic}}}
\def\jdicex{\tilde{J}_{\mathrm{D},\f}}
\def\jdice{J_{\mathrm{D},\f}}
\def\tr{\tilde{r}}
\def\mdp{\mathcal{M}}
\def\init{\mu_0}
\def\Init{\mathcal{U}}
\def\visitpi{d^\pi}
\def\visitrb{d^\Dset}
\def\bellman{\mathcal{B}}
\def\Wmax{W_{\mathrm{max}}}
\def\Rmax{R_{\mathrm{max}}}
\def\Zset{\mathcal{Z}}
\def\Nset{\mathcal{N}}
\def\Sset{S}
\def\Aset{A}
\def\SAset{\Sset\times\Aset}
\def\S{\Sset}
\def\A{\Aset}
\def\Dset{\mathcal{D}}
\def\Uset{\mathcal{U}}
\def\D{\Dset}
\def\defeq{:=}
\def\nustar{\nu^*}
\def\nustarmax{\nu^*_{\mathrm{max}}}
\def\xstar{x^*}
\def\lambdastar{\lambda^*}
\def\zetastar{\zeta^*}
\def\bellman{\mathcal{B}_{\pi}}
\def\bellmant{\mathcal{B}_{\pi}^{\top}}
\def\prob{\mathrm{Pr}}
\def\w{w_{\pi/\D}}
\def\tmix{T_{\mathrm{mix}}}
\newcommand{\tvar}[1]{\|#1\|_{\mathrm{TV}}}
\def\cite{\citep}
\begin{document}

\maketitle

\setlength{\abovedisplayskip}{2pt}
\setlength{\abovedisplayshortskip}{2pt}
\setlength{\belowdisplayskip}{2pt}
\setlength{\belowdisplayshortskip}{2pt}
\setlength{\jot}{2pt}

\setlength{\floatsep}{1ex}
\setlength{\textfloatsep}{1ex}

\vspace{-4mm}
\begin{abstract}
In many real-world applications of reinforcement learning (RL),
interactions with the environment are limited due to cost or feasibility.
This presents a challenge to traditional RL algorithms
since the max-return objective involves an expectation over on-policy samples.
We introduce a new formulation of max-return optimization that allows the
problem to be re-expressed by an expectation over an arbitrary
\emph{behavior-agnostic} and \emph{off-policy} data distribution.
We first derive this result by considering a regularized version of the {\em dual} max-return objective before extending our findings to unregularized objectives through the use of a Lagrangian formulation of the linear programming characterization of $Q$-values.
We show that, if auxiliary dual variables of the objective are optimized, then the gradient of the off-policy objective is \emph{exactly}
the on-policy policy gradient, without any use of importance weighting.
In addition to revealing the appealing theoretical properties of this approach,
we also show that it delivers good practical performance.
\end{abstract}

\section{Introduction}

The use of model-free reinforcement learning (RL) in conjunction with function approximation has proliferated in recent years, 
demonstrating successful applications in fields such as robotics~\citep{andrychowicz2018learning,nachum2019multi}, game playing~\citep{mnih2013playing}, and conversational systems~\citep{gao19neural}. 
These successes often rely on {\em on-policy} access to the environment; \ie, during the learning process agents may collect new experience from the environment using policies they choose, and these interactions are effectively unlimited.
By contrast, in many real-world applications of RL, interaction with the environment is costly, if not impossible, hence experience collection during learning is limited, necessitating the use of {\em off-policy} RL methods, \ie, algorithms which are able to learn from logged experience collected by potentially multiple and possibly unknown behavior policies.

The off-policy nature of many practical applications presents a significant challenge for RL algorithms. The traditional max-return objective is in the form of an on-policy expectation, and thus, policy gradient methods~\citep{sutton2000policy,konda00actor} require samples from the on-policy distribution to estimate the gradient of this objective.  
The most straightforward way to reconcile policy gradient with off-policy settings is via importance weighting~\citep{Precup00ET}. However, this approach is prone to high variance and instability without appropriate damping~\citep{munos2016safe,wang2016sample,gruslys2017reactor,schulman2017proximal}.  
The more common approach to the off-policy problem is to simply ignore it, which is exactly what has been proposed by many existing off-policy policy
gradient methods~\citep{degris2012off,silver2014deterministic}.  These algorithms simply compute the gradients of the max-return objective with respect to samples from the off-policy data, ignoring distribution shift in the samples.  The justification for this approach is that the maximum return policy will be optimal regardless of the sampling distribution of states.  However, such a justification is unsound in function approximation settings, where models have limited expressiveness, with potentially disastrous consequences on optimization and convergence~\citep[e.g.,][]{lu2018non}.

Value-based methods provide an alternative that may be more promising for the off-policy setting.  In these methods, a value function is learned either as a {\em critic} to a learned policy (as in actor-critic) or as the maximum return value function itself (as in $Q$-learning). This approach is based on dynamic programming in tabular settings, which is inherently off-policy and independent of any underlying data distribution.  Nevertheless, when using function approximation, the objective is
traditionally expressed as an expectation over single-step {\em Bellman errors}, which re-raises the question, ``What should the expectation be?'' Some theoretical work suggests that the ideal expectation is in fact the on-policy expectation~\citep{sutton2000policy, silver2014deterministic, smoothie}. 
In practice, this problem is usually ignored, with the same justification as that made for off-policy policy gradient methods. 
It is telling that actor-critic or $Q$-learning algorithms advertised as off-policy
still require large amounts of online interaction with the environment~\citep{HaaZhoAbbLev18,hessel2018rainbow}.

In this work, we present an \emph{\estname} (\estabb)\footnote{DICE is an abbreviation for distribution correction estimation and is taken from the DualDICE work~\citep{dualdice} on off-policy policy evaluation. Although our current work notably focuses on policy optimization as opposed to evaluation and only \emph{implicitly} estimates the distribution corrections, our derivations are nevertheless partly inspired by this previous work.} as an alternative to policy gradient and value-based methods. We start with the {\em dual} formulation of the max-return objective, which is expressed in terms of normalized state-action occupancies rather than a policy or value function.
Traditionally, this objective is considered unattractive, since access to the occupancies either requires an on-policy expectation (similar to policy gradient methods) or learning a function approximator to satisfy single-step constraints (similar to value-based methods).
We demonstrate how these problems can be remedied by adding a controllable regularizer and applying a carefully chosen change of variables, obtaining a joint objective over a policy and an auxiliary dual function (that can be interpreted as a critic).
Crucially, this objective relies only on access to samples from an arbitrary off-policy data distribution, collected by potentially multiple and possibly unknown behavior policies (under some mild conditions). 
Unlike traditional actor-critic methods, which use a separate objective for actor and critic, this formulation trains the policy (actor) and dual function (critic) to optimize the {\em same} objective.
Further illuminating the connection to policy gradient methods, we show that if the dual function is optimized, the gradient of the proposed objective with respect to the policy parameters is {\em exactly} the on-policy policy gradient. 
This way, our approach naturally avoids issues of distribution mismatch without any explicit use of importance weights.
We continue to provide an alternative derivation of the same results, based on a primal-dual form of the return-maximizing RL problem, and notably this perspective extends the previous results to both undiscounted $\gamma=1$ settings and unregularized max-return objectives.
Finally, we provide empirical evidence that \estabb can perform well on benchmark RL tasks.

\section{Background}
We consider the RL problem presented as a Markov Decision Process (MDP)~\cite{puterman1994markov}, which is specified by a tuple $\mdp = \langle \Sset, \Aset, r, T, \init \rangle$ consisting of a state space, an action space, a reward function, a transition probability function, and an initial state distribution.  A policy $\pi$ interacts with the environment by starting at an initial state $s_0 \sim \init$, and iteratively producing a sequence of distributions $\pi(\cdot|s_t)$ over $\Aset$, at
steps $t=0,1,...$, from which actions $a_t$ are sampled and successively applied to the environment.  At each step, the environment produces a scalar reward $r_t = r(s_t, a_t)$ and a next state $s_{t+1} \sim T(s_t, a_t)$.  
In RL, one wishes to learn a return-maximizing policy:
\begin{equation}\label{eq:primal}
\textstyle
  \max_{\pi} \jprimal(\pi) \defeq~(1-\gamma)~\E_{s_0\sim\init,a_0\sim\pi(s_0)}\left[\qpi(s_0,a_0)\right],
\end{equation}
where $\qpi$ describes the future rewards accumulated by $\pi$ from any state-action pair $(s,a)$,
\begin{equation}
\qpi(s,a) = \E\big[\sum_{t=0}^\infty \gamma^t r(s_t,a_t) ~\bigg|~s_0=s,a_0=a,s_t\sim T(s_{t-1},a_{t-1}),a_t\sim\pi(s_{t})\text{ for } t\ge1 \big],
\end{equation}
and $0\le \gamma<1$ is a {\em discount} factor.
This objective may be equivalently written in its {\em dual} form~\citep{puterman1994markov,WanLizBowSch08} in terms of the policy's normalized state visitation distribution as
\begin{equation}\label{eq:dual}
\textstyle
\max_{\pi} \jdual(\pi) \defeq~\E_{(s,a)\sim\visitpi}\left[r(s,a)\right],
\end{equation}
where
\begin{equation}\label{eq:stationary_dist}
\hspace{-0mm}
\visitpi(s,a)\hspace{-1mm} =\hspace{-1mm} (1-\gamma)\sum_{t=0}^\infty \gamma^t \prob\left[\left.
s_t=s,a_t=s\right| s_0\sim\init,a_t\sim\pi(s_t),s_{t+1}\sim T(s_t,a_t)\text{ for }t\ge0
\right].\hspace{-2mm}
\end{equation}
As we will discuss in~\secref{sec:q-lp} and~\appref{appendix:proof_details}, these objectives are the primal and dual of the same linear programming (LP) problem.  

In function approximation settings, optimizing $\pi$ requires access to gradients. The {\em policy gradient theorem}~\citep{sutton2000policy} provides the gradient of $\jprimal(\pi)$ as
\begin{equation}
  \label{eq:pg-thm}
\textstyle
\frac{\partial}{\partial\pi}\jprimal(\pi) = \E_{(s,a)\sim\visitpi}\left[\qpi(s,a)\nabla\log\pi(a|s)\right].
\end{equation}
To properly estimate this gradient one requires access to {\em on-policy} samples from $\visitpi$ and access to estimates of the $Q$-value function $\qpi(s,a)$.
The first requirement means that every gradient estimate of $\jprimal$ necessitates interaction with the environment, which limits applicability of this method in settings where interaction with the environment is expensive or infeasible.
The second requirement means that one must maintain estimates of the $Q$-function to learn $\pi$.  This leads to the family of {\em actor-critic} algorithms that alternate between updates to $\pi$ (the actor) and updates to a $Q$-approximator $\qhat$ (the critic). 
The critic is learned by encouraging it to satisfy single-step Bellman consistencies,
\begin{equation}
\qpi(s,a) = \bellman \qpi(s,a)\defeq r(s,a) + \gamma\cdot\E_{s'\sim T(s,a),a'\sim\pi(s')}\left[\qpi(s',a')\right],
\end{equation}
where $\bellman$ is the expected Bellman operator with respect to $\pi$. 
Thus, the critic is learned according to some variation on
\begin{equation}
\textstyle
\min_{\qhat} \jcritic(\qhat) \defeq \frac{1}{2}\E_{(s,a)\sim\beta}\left[(\bellman\qhat-\qhat)(s,a)^2\right],
\end{equation}
for some distribution $\beta$.
Although the use of an arbitrary $\beta$ suggests the critic may be learned off-policy, to achieve satisfactory performance, actor-critic algorithms generally rely on augmenting a replay buffer with {\em new} on-policy experience. Theoretical work has suggested that if one desires {\em compatible} function approximation, then an appropriate $\beta$ is, in fact, the on-policy distribution $\visitpi$~\citep{sutton2000policy, silver2014deterministic, smoothie}.

In this work, we focus on the off-policy setting directly.
Specifically, we are given a dataset $\D=\{(s_k,a_k,r_k,s_k')\}_{k=1}^N$, where $r_k=r(s_k,a_k)$; $s_k'\sim T(s_k,a_k)$; and $a_k$ has been sampled according to an unknown process.
We let $\visitrb$ denote the unknown state-action distribution, and additionally assume access to a sample of initial states, $\Init=\{s_{0,k}\}_{k=1}^N$, such that $s_{0,k}\sim\init$.

\section{\estabb via Density Regularization}\label{sec:dice-form}
We begin by presenting an informal derivation of our method, motivated as a regularization of the dual max-return objective in~\eqref{eq:dual}.
In Section~\ref{sec:q-lp} we will present our results more formally as a consequence of the Lagrangian of a linear programming formulation of the max-return objective.

\subsection{A Regularized Off-Policy Max-Return Objective}

The max-return objective~\eqref{eq:dual} is written exclusively in terms of the on-policy distribution $\visitpi$.
To introduce an off-policy distribution $\visitrb$ into the objective, we incorporate a regularizer:
\begin{equation}
  \label{eq:jdice}
\max_{\pi} \jdice(\pi) \defeq 
\E_{(s,a)\sim\visitpi}[r(s,a)]
-\alpha\fdiv(\visitpi \| \visitrb),
\end{equation}
with $\alpha>0$ and $\fdiv$ denoting the $\f$-divergence induced by a convex function $\f$:
\begin{equation}
    \fdiv(\visitpi \| \visitrb) = \E_{(s,a)\sim\visitrb}\left[\f\left(\w(s,a)\right)\right],
\end{equation}
where we have used the shorthand $\w(s,a)\defeq \frac{\visitpi(s,a)}{\visitrb(s,a)}$.
This form of regularization encourages {\em conservative} behavior, compelling
the state-action occupancies of $\pi$ to remain close to the off-policy distribution, which can improve generalization.
We emphasize that the introduction of this regularizer is to enable the subsequent derivations and \emph{not} to impose a strong constraint on the optimal policy.
Indeed, by appropriately choosing $\alpha$ and $f$, the strength of the regularization can be controlled. Later, we will show that many of our results also hold for \emph{exploratory} regularization ($\alpha<0$) and even for no regularization at all ($\alpha=0$).

At first glance, the regularization in~\eqref{eq:jdice} seems to complicate things. Not only do we still require on-policy samples from $\visitpi$, but we also have to compute $\fdiv(\visitpi\|\visitrb)$, which in general can be difficult. To make this objective more approachable, we transform the $\f$-divergence to its variational form~\citep{nguyen2010estimating} by use of a dual function $x:S\times A\to\R$:
\begin{align}
\max_{\pi} \min_{x:S\times A\to\R} & \jdicex(\pi,x) \nonumber \\
  & \defeq \E_{(s,a)\sim\visitpi}[r(s,a)]
    + \alpha \cdot \E_{(s,a)\sim\visitrb}[\fstar(x(s,a))] 
    -\alpha\cdot \E_{(s,a)\sim\visitpi}[x(s,a)] \nonumber \\
  & = \E_{(s,a)\sim\visitpi}[r(s,a)-\alpha\cdot x(s,a)]
    + \alpha\cdot \E_{(s,a)\sim\visitrb}[\fstar(x(s,a))]\,, \label{eq:dice-x}
\end{align}
where $\fstar$ is the convex (or Fenchel) conjugate of $f$.
With the objective in~\eqref{eq:dice-x}, we are finally ready to eliminate the expectation over on-policy samples from $\visitpi$. To do so, we make a change of variables, inspired by DualDICE~\citep{dualdice}. Define $\nu:S\times A \to \R$ as the fixed point of a variant of the Bellman equation,
\begin{equation}
    \label{eq:change-of-vars}
    \nu(s,a) \defeq -\alpha\cdot x(s,a) + \bellman\nu(s,a).
\end{equation} 
Equivalently, $x(s,a) = \frac{1}{\alpha}(\bellman\nu-\nu)(s,a)$.  Note that $\nu$ always exists and is bounded when $x$ and $r$ are bounded~\citep{puterman1994markov}.
Applying this change of variables to~\eqref{eq:dice-x} (after some telescoping, see~\citet{dualdice}) yields 
\begin{align}
  \max_{\pi} \min_{\nu:S\times A\to\R} & \jdice(\pi,\nu) \nonumber \\
    & \defeq (1-\gamma)\E_{\substack{s_0\sim\init \\ a_0\sim\pi(s_0)}}[\nu(s_0,a_0)]
      + \alpha\cdot\E_{(s,a)\sim\visitrb}[\fstar((\bellman\nu - \nu)(s,a)/\alpha)]\,.   \label{eq:dice-nu} 
\end{align}
The resulting objective is now completely off-policy, relying only on access to samples from the initial state distribution $\init$ and the off-policy dataset $\visitrb$.
Thus, we have our first theorem, providing an off-policy formulation of the max-return objective:
\begin{theorem}[Primal \estabb]
  \label{thm:max-return}
Under mild conditions on $\visitrb,\alpha,f$, the regularized max-return objective may be expressed as a max-min optimization:
\begin{equation}
\label{eq:thm1}
\max_{\pi} ~\E_{(s,a)\sim\visitpi}[r(s,a)] - \alpha\fdiv(\visitpi \| \visitrb) \equiv \max_{\pi} \min_{\nu:S\times A\to\R} \jdice(\pi,\nu)\,.
\end{equation}
\end{theorem}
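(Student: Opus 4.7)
The plan is to establish the identity in two substitutions: first replace the $\f$-divergence by its Fenchel variational form, and then replace the variational dual variable by a $\nu$-function via the prescribed change of variables. Both steps are set up in the text; the task is to show that they are rigorous and that the resulting objective after substitution is exactly $\jdice(\pi,\nu)$.

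Step one is to invoke the variational representation of the $\f$-divergence (under the mild conditions that $\visitpi$ is absolutely continuous with respect to $\visitrb$ and that $f$ is convex, lower semicontinuous with $f(1)=0$): for any fixed $\pi$,
\begin{equation*}
\fdiv(\visitpi\|\visitrb) \;=\; \sup_{x:S\times A\to\R}\bigl\{\E_{(s,a)\sim\visitpi}[x(s,a)]-\E_{(s,a)\sim\visitrb}[\fstar(x(s,a))]\bigr\}.
\end{equation*}
Plugging this into the LHS of \eqref{eq:thm1} and pushing the minus sign inside turns the supremum into an infimum, yielding $\max_\pi\inf_x \jdicex(\pi,x)$ with $\jdicex$ exactly as in \eqref{eq:dice-x}. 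The ``mild conditions'' in the statement are precisely what is needed to ensure this variational form is valid and the dual class $x:S\times A\to\R$ is rich enough to attain equality.

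Step two is the change of variables $x(s,a)=\tfrac{1}{\alpha}(\bellman\nu-\nu)(s,a)$. The key point to check is that this is a bijection between bounded $x$ and bounded $\nu$. The backward direction ($\nu\mapsto x$) is immediate. For the forward direction, fix $x$ and view $\nu\mapsto \bellman\nu-\alpha x$ as an operator on bounded functions: since $\bellman$ is a $\gamma$-contraction in sup-norm (with $\gamma<1$), this operator is a contraction, and Banach's fixed-point theorem gives a unique bounded $\nu$ solving $\nu=\bellman\nu-\alpha x$. The bijection then implies $\inf_x \jdicex(\pi,x)=\inf_\nu \jdice(\pi,\nu)$ provided we verify that $\jdicex(\pi,x)$ transforms into the claimed $\jdice(\pi,\nu)$.

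The final substitution step is the telescoping identity: after plugging in $x=(\bellman\nu-\nu)/\alpha$, the reward-plus-penalty term becomes
\begin{equation*}
\E_{(s,a)\sim\visitpi}\bigl[r(s,a)-(\bellman\nu-\nu)(s,a)\bigr]
= \E_{(s,a)\sim\visitpi}\bigl[\nu(s,a)-\gamma\,\E_{s'\sim T(s,a),a'\sim\pi(s')}[\nu(s',a')]\bigr],
\end{equation*}
since the $r(s,a)$ cancels. Expanding $\visitpi$ using \eqref{eq:stationary_dist} as $(1-\gamma)\sum_{t\geq 0}\gamma^t\prob[s_t=s,a_t=a]$ and shifting indices in the second summand collapses the series to $(1-\gamma)\E_{s_0\sim\init,a_0\sim\pi(s_0)}[\nu(s_0,a_0)]$. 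Combining this with the unchanged $\visitrb$-term gives $\jdice(\pi,\nu)$. The main obstacle in this plan is the bijection/well-posedness of the change of variables: one has to be careful that both sides of the equivalence range over the same class of dual variables, which is exactly where the contraction property of $\bellman$ (and hence $\gamma<1$) is used; everything else is a mechanical application of Fenchel duality and the telescoping argument from DualDICE.
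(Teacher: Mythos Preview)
Your proposal is correct and follows essentially the same approach as the paper's derivation in Section~\ref{sec:dice-form}: Fenchel/variational form of the $\f$-divergence, the change of variables $x=\tfrac{1}{\alpha}(\bellman\nu-\nu)$, and the telescoping identity reducing $\E_{\visitpi}[\nu-\gamma P_\pi\nu]$ to $(1-\gamma)\E_{\init\pi}[\nu]$. Your explicit use of the $\gamma$-contraction of $\bellman$ and Banach's fixed-point theorem to justify the bijection is exactly what the paper invokes more tersely (citing~\citet{puterman1994markov} for existence and boundedness of $\nu$), and the telescoping you spell out matches the computation in the appendix proof of Theorem~\ref{thm:effect_reg}.
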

\paragraph{Remark (extension to $\alpha < 0$):}
It is clear that the same derivations above may apply to an exploratory regularizer of the same form 
with $\alpha<0$, which leads to the following optimization problem:
  \begin{equation}
    \max_\pi\max_{\nu:S\times A\to\R} ~(1-\gamma)\cdot\E_{\substack{s_0\sim\init \\ a_0\sim\pi(s_0)}}[\nu(s_0,a_0)] + \alpha\cdot\E_{(s,a)\sim\visitrb}[\fstar((\bellman\nu - \nu)(s,a) / \alpha)].
  \end{equation}
\paragraph{Remark (Fenchel \estabb)} 
The appearance of $\bellman$ inside $\fstar$ in the second term of~\eqref{eq:dice-nu} presents a challenge in practice, since $\bellman$ involves an expectation over the transition function $T$, whereas one typically only has access to a single empirical sample from $T$ for a given state-action pair.  This challenge, known as double sampling in the RL literature~\citep{baird1995residual}, can prevent the algorithm from finding the desired value function, even with infinite data.
There are several alternatives to handle this issue~\citep[e.g.,][]{antos2006learning,farahmand16regularized,feng19kernel}.
Here, we apply the dual embedding technique~\citep{DaiHePanBooetal16,dai18sbeed}.
Specifically, the dual representation of $\fstar$,
$$
    \fstar((\bellman\nu - \nu)(s,a)/\alpha) = \max_{\zeta\in\R} ~\frac{1}{\alpha}(\bellman\nu - \nu)(s,a) \cdot \zeta - f(\zeta),
$$
can be substituted into~\eqref{eq:dice-nu}, to result in a $\max$-$\min$-$\max$ problem:
\begin{align}
& \max_{\pi} ~\E_{(s,a)\sim\visitpi}[r(s,a)]
-\alpha\fdiv(\visitpi \| \visitrb) \nonumber \\
= & 
\max_{\pi} \min_{\nu:S\times A\to\R}\max_{\zeta:\S\times\A\to\R} ~
        (1-\gamma)\E_{s_0\sim\init, a_0\sim\pi(s_0)}[\nu(s_0,a_0)] + \nonumber \\
 & ~~~~ \E_{(s,a)\sim\visitrb,s'\sim T(s,a),a'\sim\pi(s')}[(\gamma \nu(s',a') + r(s,a) - \nu(s,a)) \cdot \zeta(s,a) - \alpha\cdot\f(\zeta(s,a))]\,. \label{eq:fenchel-dice-nu}
\end{align}
As we will see in Section~\ref{sec:q-lp}, under mild conditions, strong duality holds in the inner $\min$-$\max$ of~\eqref{eq:fenchel-dice-nu}, hence one can switch the $\min_\nu$ and $\max_\zeta$ to reduce to a more convenient $\max$-$\max$-$\min$ form. 


\subsection{Consistent Policy Gradient using Off-Policy Data}

The equivalence between the objective in~\eqref{eq:dice-nu} and the on-policy max-return objective can be highlighted by considering the gradient of this objective with respect to $\pi$.
First, consider the optimal $\xstar_\pi \defeq \argmin_x \jdice(\pi, x)$ for~\eqref{eq:dice-x}. By taking the gradient of $\jdice$ with respect to $x$ and setting this to 0, one finds that $\xstar_\pi$ satisfies
\begin{equation}
\label{eq:xstar}
\fstar'(\xstar_\pi(s,a)) = \w(s,a).
\end{equation}
Accordingly, for any $\pi$, the optimal $\nustar_\pi\defeq \argmin_\nu \jdice(\pi, \nu)$ for~\eqref{eq:dice-nu} satisfies
\begin{equation}
\fstar'((\bellman\nustar_\pi - \nustar_\pi)(s,a) / \alpha) = \w(s,a).
\end{equation}
Thus, we may express the gradient of $\jdice(\pi, \nustar_\pi)$ with respect to $\pi$ as
\begin{align*}
\frac{\partial}{\partial\pi} \jdice(\pi, \nustar_\pi) 
&= (1-\gamma)\frac{\partial}{\partial\pi}\E_{\substack{a_0\sim\pi(s_0) \\ s_0\sim\init}}[\nustar_\pi(s_0,a_0)]
        + \E_{(s,a)\sim\visitrb}\left[\w(s,a) \frac{\partial}{\partial\pi}(\bellman\nustar_\pi - \nustar_\pi)(s,a)\right] \\
&=(1-\gamma)\frac{\partial}{\partial\pi}\E_{\substack{a_0\sim\pi(s_0) \\ s_0\sim\init}}[\nustar_\pi(s_0,a_0)]
+ \gamma\cdot \E_{\substack{(s,a)\sim\visitpi,\\ s'\sim T(s,a)}}\left[ \frac{\partial}{\partial\pi} \E_{a'\sim\pi(s')}[\nustar_\pi(s',a')]\right] \\
&= \E_{(s,a)\sim\visitpi}\left[\nustar_\pi(s,a) \nabla \log\pi(a|s)\right],
\end{align*}
where we have used Danskin's theorem~\citep{Bertsekas99} to ignore gradients of $\pi$ through $\nustar_\pi$. Hence, if the dual function $\nu$ is optimized, the gradient of the off-policy objective $\jdice(\nu,\pi)$ is \emph{exactly} the on-policy policy gradient, with $Q$-value function given by $\nustar_\pi$. 

To characterize this $Q$-value function, note that
from~\eqref{eq:change-of-vars}, $\nustar_\pi$ is a $Q$-value function with respect to augmented reward $\tr(s,a)\defeq r(s,a) - \alpha\cdot \xstar_\pi(s,a)$.
Recalling the expression for $\xstar_\pi$ in~\eqref{eq:xstar} and the fact that the derivatives $\f'$ and $\fstar'$ are inverses of each other, we have, $\tr(s,a) = r(s,a) - \alpha\cdot \f'(\w(s,a))$.
This derivation leads to our second theorem:
\begin{theorem}
  \label{thm:on-policy}
If the dual function $\nu$ is optimized, the gradient of the off-policy objective $\jdice(\pi, \nu)$ with respect to $\pi$ is the regularized on-policy policy gradient:
\begin{equation} \label{eqn:offpolicy-gradient}
\frac{\partial}{\partial\pi} \min_{\nu} \jdice(\pi, \nu) =
\E_{(s,a)\sim\visitpi}\left[\tilde{Q}_\pi(s,a) \nabla \log\pi(a|s)\right],
\end{equation}
where, $\tilde{Q}_\pi(s,a)$ is the $Q$-value function of $\pi$ with respect to rewards $\tr(s,a) \defeq r(s,a) - \alpha\cdot\f'(\w(s,a))$.
\end{theorem}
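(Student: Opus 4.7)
The plan is to take $\nu=\nustar_\pi$, the inner minimizer that Danskin's theorem allows us to freeze, and compute $\partial\jdice/\partial\pi$ at that point, then massage the resulting expression into the standard policy-gradient shape using two ingredients already introduced in the excerpt: the first-order optimality condition $\fstar'((\bellman\nustar_\pi-\nustar_\pi)/\alpha)=\w$, and the fixed-point definition $\nustar_\pi = -\alpha\cdot\xstar_\pi + \bellman\nustar_\pi$.

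First I would invoke Danskin's theorem: since $\jdice(\pi,\nu)$ is convex in $\nu$ (as $\fstar$ is convex) and the minimizer is unique under the stated mild conditions, the envelope derivative of $\min_\nu\jdice(\pi,\nu)$ in $\pi$ equals $\frac{\partial}{\partial\pi}\jdice(\pi,\nustar_\pi)$ with $\nustar_\pi$ held fixed. Differentiating the two terms of $\jdice$ separately, the $\fstar$ term produces by the chain rule a factor $\fstar'((\bellman\nustar_\pi-\nustar_\pi)/\alpha)$, which by optimality equals $\w(s,a)$. Combined with the identity $\E_{(s,a)\sim\visitrb}[\w(s,a)\cdot g(s,a)] = \E_{(s,a)\sim\visitpi}[g(s,a)]$, this transforms the $\visitrb$-expectation into an on-policy expectation of $\gamma\cdot\frac{\partial}{\partial\pi}\E_{a'\sim\pi(s')}[\nustar_\pi(s',a')]$. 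Collecting this with the initial-state term and applying the standard proof of the policy gradient theorem --- using the log-derivative trick together with the definition of $\visitpi$ as a discounted mixture over time steps --- collapses the sum into $\E_{(s,a)\sim\visitpi}[\nustar_\pi(s,a)\nabla\log\pi(a|s)]$.

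To close, I would identify $\nustar_\pi$ with $\tilde Q_\pi$. Equation~\eqref{eq:change-of-vars} evaluated at $\nustar_\pi$ reads $\nustar_\pi(s,a) = -\alpha\cdot\xstar_\pi(s,a) + (\bellman\nustar_\pi)(s,a)$, which is precisely the Bellman evaluation equation for $\pi$ under the augmented reward $\tr(s,a) \defeq r(s,a) - \alpha\cdot\xstar_\pi(s,a)$; uniqueness of the fixed point then forces $\nustar_\pi = \tilde Q_\pi$. Inverting the optimality condition via $f'\circ\fstar' = \mathrm{id}$ gives $\xstar_\pi = f'(\w)$, hence $\tr = r - \alpha\cdot f'(\w)$, as claimed. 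The main obstacle will be justifying both the interchange of gradient and expectation and the rigorous use of Danskin's theorem --- one must verify that $\nustar_\pi$ depends smoothly enough on $\pi$ and that the inner problem is strictly convex with a unique minimizer under the stated conditions on $\visitrb$, $\alpha$, and $f$; once these analytic points are settled, the derivation reduces to the algebraic chain already sketched in the excerpt.
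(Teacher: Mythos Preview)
Your proposal is correct and follows essentially the same route as the paper: apply Danskin's theorem to freeze $\nustar_\pi$, use the first-order condition $\fstar'((\bellman\nustar_\pi-\nustar_\pi)/\alpha)=\w$ to convert the $\visitrb$-expectation into an on-policy one, collapse the initial-state and next-state terms via the policy-gradient identity, and finally identify $\nustar_\pi$ with the $Q$-function for the augmented reward $r-\alpha f'(\w)$ via the Bellman fixed-point and the inverse relation between $f'$ and $\fstar'$. The paper's derivation is slightly less explicit about the analytic hypotheses you flag (strict convexity, smoothness of $\nustar_\pi$ in $\pi$), but the argument is otherwise identical.
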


\paragraph{Remark}
We note that Theorem~\ref{thm:on-policy} also holds when using the more sophisticated objective in~\eqref{eq:fenchel-dice-nu}, since the optimal $\zetastar_\pi$ is equal to $\w$, regardless of $\pi$. 

\subsection{Connection to Actor-Critic}
The relationship between the proposed off-policy objective and the 
classic policy gradient becomes more profound when we consider the form of the objective under specific choices of convex function $\f$.
If we take $f(x)=\frac{1}{2}x^2$, then $\fstar(x)=\frac{1}{2}x^2$ and the proposed objective is reminiscent of actor-critic:
\begin{equation*}
\label{eq:dice-sq}
\max_{\pi} \min_{\nu:S\times A\to\R} \jdice(\pi,\nu) \defeq  (1-\gamma)\E_{s_0\sim\init,a_0\sim\pi(s_0)}[\nu(s_0,a_0)] + \frac{1}{2\alpha}\cdot\E_{(s,a)\sim\visitrb}[((\bellman\nu-\nu)(s,a))^2]. 
\end{equation*}
The second term alone is an instantiation of the off-policy critic objective in actor-critic. 
However, in actor-critic, the use of an off-policy objective for the critic is difficult to theoretically motivate.  
Moreover, in practice, critic and actor learning can both suffer from the mismatch between the off-policy distribution $\visitrb$ and the on-policy $\visitpi$.
By contrast, our derivations show that the introduction of the first term to the objective transforms the off-policy actor-critic algorithm to an on-policy actor-critic, without any explicit use of importance weights. 
Moreover, while standard actor-critic has two separate objectives for
value and policy, our proposed objective is a single, unified objective. Both the policy and value functions are trained with respect to the same off-policy objective.

\section{A Lagrangian View of~\estabb}\label{sec:q-lp}

We now show how \estabb can be alternatively derived from the Lagrangian of a linear programming~(LP) formulation of the $\qpi$-function. Please refer to~\appref{appendix:proof_details} for details. 
We begin by introducing some notations and assumptions, which have appeared in the literature~\citep[e.g.,][]{puterman1994markov,dualdice,zhang2020gendice}.
\begin{assumption}[Bounded rewards]\label{assumption:bounded_rewards}
  The rewards of the MDP are bounded by some finite constant $\Rmax$: $\nbr{r}_\infty\le \Rmax$.
\end{assumption}
For the next assumption, we introduce the \emph{transpose Bellman operator}:
\begin{equation}
\bellmant\rho\rbr{s', a'} \defeq \gamma\sum_{s, a}{\pi\rbr{a'|s'}T\rbr{s'|s, a}\rho\rbr{s, a}} + \rbr{1 -\gamma}\mu_0\rbr{s'}\pi\rbr{a'|s'}.
\end{equation}
\begin{assumption}[MDP regularity]\label{assumption:mdp_reg}
  The transposed Bellman operator $\bellmant$
  has a unique fixed point solution.\footnote{
When $\gamma\in [0, 1)$, $\bellmant$ has a unique fixed point regardless of the underlying MDP. 
For $\gamma=1$, in the discrete case, the assumption reduces to requiring that $\bellmant$ be ergodic. The continuous case for $\gamma=1$ is more involved; see~\citet{MeyTwe12,LevPer17} for a detailed discussion.}
\end{assumption}
\begin{assumption}[Bounded ratio]\label{assumption:bounded_ratios}
The target density ratio is bounded by some finite constant $\Wmax$: $\nbr{\w}_\infty\le \Wmax$.
\end{assumption}
\begin{assumption}[Characterization of $f$]\label{assumption:convex_f}
  The function $\f$ is convex with domain $\R$ and continuous derivative $\f'$. 
  The convex (Fenchel) conjugate of $\f$ is $\fstar$, and $\fstar$ is closed and {strictly convex}.  
  The derivative $\fstar'$ is continuous and its range is a superset of $[0,\Wmax]$.
\end{assumption}
For convenience, we define $\Nset \defeq [-\frac{C}{1-\gamma}, \frac{C}{1-\gamma}]$ where $C=\Rmax + |\alpha| \cdot \f'(\Wmax)$.  $\Nset$ will serve as the range for $\nu$.

Our derivation begins with a formalization of the LP characterization of the $\qpi$-function and its dual form:
\begin{theorem}\label{theorem:q_lp}
  Given a policy $\pi$,
  the average return of $\pi$ may be expressed in the primal and dual forms as \\
  \begin{minipage}{0.45\textwidth}
\begin{align}\label{eq:q_lp}
\min_{\nu:\S\times\A\rightarrow \Nset}& \jprimal(\pi, \nu) \defeq \rbr{1-\gamma}\EE_{\mu_0\pi}\sbr{\nu\rbr{s_0, a_0}}\nonumber \\
  \st~~~~ &\nu\rbr{s, a}\ge \bellman\nu(s, a), \\
  & \forall \rbr{s, a}\in S\times A,\nonumber
\end{align}
\end{minipage}
~and,
\begin{minipage}{0.45\textwidth}
\begin{align} \label{eq:rho_lp}
  \max_{\rho:\S\times\A\rightarrow \RR_+}& \jdual(\pi, \rho) \defeq \EE_{\rho}\sbr{r\rbr{s, a}}\nonumber\\
  \st~~~~ &\rho\rbr{s, a} =  \bellmant\rho(s, a), \\
  & \forall \rbr{s, a}\in S\times A,\nonumber 
\end{align}
\end{minipage}

  respectively. Under Assumptions~\ref{assumption:bounded_rewards} and \ref{assumption:mdp_reg}, strong duality holds, \ie, $\jprimal(\pi, \nustar_\pi) = \jdual(\pi, \rho^*_\pi)$ for optimal solutions $\nustar_\pi,\rho^*_\pi$. 
  The optimal primal satisfies $\nustar_\pi\rbr{s, a} = \qpi\rbr{s, a}$ for all $\rbr{s, a}$ reachable by $\pi$ 
  and the optimal dual $\rho^*_\pi$ is $\visitpi$.
\end{theorem}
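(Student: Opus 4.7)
The plan is to establish the two LP formulations, then verify strong duality and identify the optimal solutions. I would proceed in the order primal analysis, dual analysis, duality, and finally the matching-values identification.

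For the primal \eqref{eq:q_lp}, I would first show that any feasible $\nu$ dominates $\qpi$ at every state-action pair reachable by $\pi$. Starting from $\nu(s,a)\ge\bellman\nu(s,a)=r(s,a)+\gamma\,\mathbb{E}_{s'\sim T(s,a),a'\sim\pi(s')}[\nu(s',a')]$ and iterating the inequality along trajectories generated by $\pi$, monotonicity of $\bellman$ and the boundedness supplied by $\nu\in\Nset$ (together with $\gamma<1$, or Assumption~\ref{assumption:mdp_reg} when $\gamma=1$) justify passing to the limit, giving $\nu(s,a)\ge \qpi(s,a)$ on reachable pairs. Conversely, $\qpi$ is itself feasible since it satisfies Bellman consistency with equality. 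Hence among feasible $\nu$, the objective $(1-\gamma)\mathbb{E}_{\mu_0\pi}[\nu(s_0,a_0)]$ is minimized by (any $\nu$ that agrees with) $\qpi$ on the support of $\mu_0\pi$, giving value $(1-\gamma)\mathbb{E}_{\mu_0\pi}[\qpi(s_0,a_0)]=\jprimal(\pi)$.

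For the dual \eqref{eq:rho_lp}, I would verify that the discounted occupancy $\visitpi$ defined in \eqref{eq:stationary_dist} is feasible, i.e., satisfies $\visitpi=\bellmant\visitpi$. This is the standard Bellman flow identity: telescoping the definition of $\visitpi$ step-by-step and using the definition of $\bellmant$ shows the fixed-point equation, and nonnegativity is immediate. Under Assumption~\ref{assumption:mdp_reg} the fixed point is unique, so $\visitpi$ is the only feasible $\rho$ with the correct normalization, and the dual objective reduces to $\mathbb{E}_{\visitpi}[r(s,a)]=\jdual(\pi)$.

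For strong duality I would construct the Lagrangian of \eqref{eq:q_lp} by introducing multipliers $\rho(s,a)\ge 0$ for the inequality constraints $\nu(s,a)-\bellman\nu(s,a)\ge 0$, yielding
\begin{equation*}
L(\nu,\rho) = (1-\gamma)\mathbb{E}_{\mu_0\pi}[\nu(s_0,a_0)] + \sum_{s,a}\rho(s,a)\bigl(\bellman\nu(s,a)-\nu(s,a)\bigr).
\end{equation*}
Collecting terms in $\nu$ and using the identity $\sum_{s,a}\rho(s,a)\bellman\nu(s,a) = \sum_{s',a'}\bigl(\bellmant\rho(s',a') - (1-\gamma)\mu_0(s')\pi(a'|s')\bigr)\nu(s',a')$ shows that $\min_\nu L$ is $-\infty$ unless $\rho=\bellmant\rho$, in which case it equals $\mathbb{E}_\rho[r(s,a)]$; this recovers \eqref{eq:rho_lp} as the Lagrangian dual. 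Boundedness of $r$ (Assumption~\ref{assumption:bounded_rewards}) and existence of a strictly feasible primal (take $\nu$ slightly above $\qpi$) give Slater-type conditions, so strong duality applies. Finally the identity $(1-\gamma)\mathbb{E}_{\mu_0\pi}[\qpi(s_0,a_0)]=\mathbb{E}_{\visitpi}[r(s,a)]$ is a standard one-line telescoping using \eqref{eq:stationary_dist}, confirming $\jprimal(\pi,\nustar_\pi)=\jdual(\pi,\rho^*_\pi)$.

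The main obstacle is the $\gamma=1$ case, where the Bellman iteration used to establish $\nu\ge\qpi$ does not contract and the primal objective is trivially zero: here I would have to lean on Assumption~\ref{assumption:mdp_reg} (ergodicity/uniqueness of the transpose fixed point) and interpret the ``reachable'' qualifier carefully, following the measure-theoretic treatment in \citet{MeyTwe12,LevPer17} cited in the footnote. A secondary subtlety is that the primal constraint is an inequality and the primal optimum is attained only on the support of the occupancy $\visitpi$; I would spell out that outside this support $\nustar_\pi$ is underdetermined, which is why the theorem is stated for reachable $(s,a)$.
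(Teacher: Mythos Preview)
Your proposal is correct and follows essentially the same approach as the paper: the primal argument via iterated monotonicity of $\bellman$ to get $\nu\ge\qpi$ is identical, and the Lagrangian derivation of the dual matches. The only minor difference is in identifying $\rho^*_\pi$: the paper solves the linear system $(I-\gamma P_\pi^\top)\rho=(1-\gamma)\mu_0\pi$ explicitly via the Neumann series $\sum_t\gamma^t(P_\pi^\top)^t$, whereas you verify $\visitpi$ satisfies the flow equation and invoke Assumption~\ref{assumption:mdp_reg} for uniqueness---both reach the same conclusion, and your concern about $\gamma=1$ is unnecessary here since this theorem is for $\gamma<1$ (the undiscounted case is handled separately in Theorem~\ref{theorem:q_lp_ud}).
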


Consider the Lagrangian of $\jprimal$, which would typically be expressed with a sum (or integral) of constraints weighted by $\rho$. We can reparametrize the dual variable as $\zeta\rbr{s, a} = \frac{\rho\rbr{s, a}}{\visitrb\rbr{s, a}}$ to express the Lagrangian as,
\begin{equation}\label{eq:q_lagrangian}
  \min_{\nu:\S\times\A\rightarrow \Nset}\max_{\zeta:\S\times\A\rightarrow \RR_+} \,\, \rbr{1 - \gamma}\EE_{(s_0, a_0)\sim\init\pi}\sbr{\nu\rbr{s_0, a_0}} + \EE_{(s,a)\sim\visitrb}\sbr{\zeta\rbr{s, a} {\rbr{\bellman \nu - \nu}\rbr{s, a}}}.
  \hspace{-2mm}
\end{equation}
The optimal $\zetastar_\pi$ of this Lagrangian is $\w$,
and this optimal solution is not affected by expanding the allowable range of $\zeta$ to all of $\R$.
However, in practice, the linear structure in~\eqref{eq:q_lagrangian} can induce numerical instability. Therefore, inspired by the augmented Lagrangian method, we introduce regularization. By adding a special regularizer $\alpha\cdot\EE_{\visitrb}\sbr{f\rbr{\zeta\rbr{s, a}}}$ using convex $f$, we obtain
\begin{multline}\label{eq:q_lagrangian_reg}
  \min_{\nu:\Sset\times\Aset\to\Nset}\max_{\zeta:\Sset\times\Aset\to\RR} L\rbr{\nu, \zeta; \pi}\defeq\rbr{1 - \gamma}\EE_{(s_0,a_0\sim\init\pi}\sbr{\nu\rbr{s_0, a_0}} + \\ \EE_{(s,a)\sim\visitrb}\sbr{\zeta\rbr{s, a} {\rbr{\bellman \nu - \nu}\rbr{s, a}}} - \alpha\cdot\EE_{(s,a)\sim\visitrb}\sbr{f\rbr{\zeta\rbr{s, a}}}.
\end{multline}
We characterize the optimizers $\nustar_\pi$ and $\zetastar_\pi$ and the optimum value
$L(\nustar_\pi,\zetastar_\pi;\pi)$ of this objective in the following theorem.
Interestingly, although the regularization can affect the optimal primal solution $\nustar_\pi$, the optimal dual solution $\zetastar_\pi$ is unchanged.
\begin{theorem}\label{thm:effect_reg}
Under Assumptions~\ref{assumption:bounded_rewards}--\ref{assumption:convex_f},
the solution to~\eqref{eq:q_lagrangian_reg} is given by,
\begin{eqnarray*}
  \nustar_\pi\rbr{s, a} &=& -\alpha f'\rbr{\w\rbr{s, a}}+\bellman\nustar_\pi\rbr{s,a}, \\
  \zetastar_\pi\rbr{s, a} &=& \w\rbr{s, a}.
\end{eqnarray*}
The optimal value is $L\rbr{\nustar_\pi, \zetastar_\pi;\pi} = \E_{\visitpi}[r(s,a)]-\alpha\fdiv(\visitpi \| \visitrb)$. 
\end{theorem}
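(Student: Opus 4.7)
The plan is to characterize the saddle point of the convex--concave Lagrangian~\eqref{eq:q_lagrangian_reg} in closed form, read off $\nustar_\pi$ and $\zetastar_\pi$ from the first-order conditions, and then evaluate the Lagrangian at the saddle point. The cleanest route is to swap the $\min$ and $\max$---for which strong duality must be justified---because the inner $\min_\nu$ is easy to analyze: $L$ is linear in $\nu$, so the minimum is $-\infty$ unless the coefficient of $\nu$ vanishes, and this stationarity condition pins down $\zetastar_\pi$ uniquely.

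Carrying this out, I would expand $\E_{\visitrb}[\zeta\cdot(\bellman\nu-\nu)]$ and collect the coefficient of $\nu(s',a')$. The resulting linear functional is zero for every $\nu\in\Nset^{\Sset\times\Aset}$ if and only if $\visitrb(s',a')\zeta(s',a') = (1-\gamma)\init(s')\pi(a'|s') + \gamma\sum_{s,a}\pi(a'|s')T(s'|s,a)\visitrb(s,a)\zeta(s,a)$, i.e., $\visitrb\zeta$ is a fixed point of $\bellmant$. By Assumption~\ref{assumption:mdp_reg} and Theorem~\ref{theorem:q_lp} this fixed point is unique and equals $\visitpi$, giving $\zetastar_\pi(s,a)=\w(s,a)$. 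Substituting back, the remaining terms collapse to $\E_{\visitrb}[\w\cdot r]-\alpha\E_{\visitrb}[f(\w)] = \E_{\visitpi}[r] - \alpha\fdiv(\visitpi\|\visitrb)$, which yields the claimed optimal value once strong duality is in hand.

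To justify the min--max interchange, I would invoke Sion's theorem. The Lagrangian is linear (hence concave) in $\nu$, and Assumption~\ref{assumption:convex_f} makes it strictly concave in $\zeta$. The $\nu$-domain $\Nset^{\Sset\times\Aset}$ is compact, but the $\zeta$-domain is $\R^{\Sset\times\Aset}$; to handle this, I would first note that for any fixed $\nu$, coercivity of $-\alpha f(\zeta)$ drives $L\to-\infty$ as $\|\zeta\|\to\infty$, so the supremum is attained on a compact set containing the candidate $\zetastar_\pi$ (bounded by $\Wmax$ via Assumption~\ref{assumption:bounded_ratios}), and one may truncate $\zeta$ to a sufficiently large compact ball without altering either side of the minimax equality. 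The form of $\nustar_\pi$ then follows from the remaining first-order condition: differentiating $L$ in $\zeta$ at the saddle point gives $(\bellman\nustar_\pi-\nustar_\pi)(s,a)=\alpha f'(\w(s,a))$ pointwise on $\mathrm{supp}(\visitrb)$, i.e., $\nustar_\pi=-\alpha f'(\w)+\bellman\nustar_\pi$; by Assumption~\ref{assumption:bounded_rewards} together with the Bellman contraction, this equation has a unique bounded solution in $\Nset^{\Sset\times\Aset}$.

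The main obstacle is the strong-duality argument, since the unbounded $\zeta$-domain is not immediately covered by classical minimax results; the coercivity/truncation step above is what makes Sion's theorem applicable. As a redundant sanity check, one may instead maximize over $\zeta$ first, which collapses $L$ pointwise to $\jdice(\pi,\nu)$ from~\eqref{eq:dice-nu} via $\max_\zeta[\zeta y - \alpha f(\zeta)] = \alpha\fstar(y/\alpha)$, whereupon Theorem~\ref{thm:max-return} already identifies $\min_\nu L$ with $\E_{\visitpi}[r]-\alpha\fdiv(\visitpi\|\visitrb)$, matching the value computed above.
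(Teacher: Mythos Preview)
Your route---swap the $\min$ and $\max$ via Sion, then exploit linearity in $\nu$ to force the coefficient to vanish and read off $\zetastar_\pi$ as the unique $\bellmant$-fixed-point condition---is genuinely different from the paper's. The paper instead performs the inner $\max_\zeta$ first via Fenchel duality, applies the change of variables $x=(\bellman\nu-\nu)/\alpha$, telescopes $(1-\gamma)\E_{\init\pi}[\nu]$ into $\E_{\visitpi}[r]-\alpha\,\E_{\visitpi}[x]$, and then minimizes pointwise over $x$ to obtain $\xstar_\pi=f'(\w)$; $\nustar_\pi$ and $\zetastar_\pi$ fall out of the Fenchel optimality conditions. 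Your approach has the appeal of deriving $\zetastar_\pi=\w$ without any telescoping; the paper's approach never needs the min--max swap. Your ``redundant sanity check'' is in fact precisely the paper's argument.

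That said, your main line has a genuine inconsistency. You invoke compactness of $\Nset^{\Sset\times\Aset}$ to apply Sion, yet claim ``the minimum is $-\infty$ unless the coefficient of $\nu$ vanishes.'' These cannot both hold: on the compact box, a linear functional attains a finite minimum at a vertex regardless of whether its gradient vanishes, so the argument pinning down $\zetastar_\pi$ does not go through as written. You need either (i) to run the linearity argument with unrestricted $\nu$, locate the candidate saddle point, and then verify \emph{a posteriori} that $\nustar_\pi$ lies in the interior of $\Nset^{\Sset\times\Aset}$ so restricting to the box changes nothing; or (ii) to keep the box throughout but argue separately that at any saddle point the $\nu$-minimizer is interior, whence first-order stationarity applies. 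Either way, the check that $\nustar_\pi\in\Nset^{\Sset\times\Aset}$ becomes load-bearing rather than a tag-on. A secondary gap: Assumption~\ref{assumption:convex_f} does not guarantee coercivity of $f$ (take $f(x)=e^x$, which satisfies the assumption but has $f(\zeta)\to 0$ as $\zeta\to-\infty$), so ``coercivity of $-\alpha f(\zeta)$ drives $L\to-\infty$'' is not available in general; you would instead truncate $\zeta$ to a box containing $[0,\Wmax]$ and verify directly that neither side of the minimax is affected.
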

Thus, we have recovered the Fenchel \estabb objective for $\pi$, given in Equation~\ref{eq:fenchel-dice-nu}.
Furthermore, one may reverse the Legendre transform, $\fstar((\bellman\nu - \nu)(s,a)/\alpha) = \max_{\zeta}~\frac{1}{\alpha}(\bellman\nu - \nu)(s,a) \cdot \zeta - f(\zeta)$, to recover the Primal \estabb objective in Equation~\eqref{eq:thm1}.

The derivation of this same result from the LP perspective allows us to exploit strong duality. Specifically, under the assumption that $\w$ and $r$ are bounded, $\rbr{\nustar_\pi, \zetastar_\pi}$ does not change if we optimize $L\rbr{\nu, \zeta;\pi}$ over a bounded space $\Hcal\times\Fcal$, as long as $\rbr{\nustar_\pi, \zetastar_\pi}\in \Hcal\times\Fcal$.
In this case, strong duality holds~\citep[Proposition~2.1]{EkeTem99}, and we obtain
$$
\min_{\nu\in\Hcal}\max_{\zeta\in\Fcal}L\rbr{\nu, \zeta; \pi} = \max_{\zeta\in\Fcal}\min_{\nu\in\Hcal} L\rbr{\nu, \zeta; \pi}.
$$    
This implies that, for computational efficiency, we can optimize the policy via
\begin{multline}\label{eq:q_lagrangian_reg_opt}
  \max_{\pi}\ell\rbr{\pi} \defeq  \max_{\zeta\in \Fcal}\min_{\nu\in\Hcal} \rbr{1 - \gamma}\EE_{(s_0,a_0)\sim\init\pi}\sbr{\nu\rbr{s_0, a_0}} + \\ \EE_{\visitrb}\sbr{\zeta\rbr{s, a} {\rbr{\bellman \nu -\nu}\rbr{s, a}}} - \alpha\cdot\EE_{\visitrb}\sbr{f\rbr{\zeta\rbr{s, a}}}.
\end{multline}

\paragraph{Remark (extensions to $\gamma = 1$ or $\alpha = 0$):}
Although~\estabb~is originally derived for $\gamma\in[0, 1)$ and $\alpha>0$ in~\secref{sec:dice-form}, the Lagrangian view of the LP formulation of $\qpi$ can be used to generalize the algorithm to $\gamma=1$ and $\alpha=0$. In particular, for $\alpha = 0$, one can directly use the original Lagrangian for the LP. For the case $\gamma =1$, the problem reduces to the Lagrangian of the LP for an undiscounted $\qpi$-function; details are delegated to \appref{appendix:proof_details}.


\paragraph{Remark (off-policy evaluation):}
The LP form of the $Q$-values leading to the Lagrangian~\eqref{eq:q_lagrangian} can be directly used for \emph{behavior-agnostic off-policy evaluation}~(OPE). In fact, existing estimators for OPE in the behavior-agnostic setting which typically reduce the OPE problem to estimation of quantities $\w$ (\eg, DualDICE~\citep{dualdice} and GenDICE~\citep{zhang2020gendice}) can be recast as special cases by introducing different regularizations to the Lagrangian. As we have shown, the solution to the Lagrangian provides both (regularized)
$Q$-values and the desired state-action corrections $\w$ as primal and dual variables simultaneously.

\section{Related Work}

Algorithmically, our proposed method follows a Lagrangian primal-dual view of the LP characterization of the $Q$-function, which leads to a saddle-point problem. 
Several recent works~\citep[e.g.,][]{ChenWang16,Wang17,DaiShaHeLietal18,dai18sbeed,CheLiWan18,LeeNiao18} also considered saddle-point formulations for policy improvement, derived from fundamentally different perspectives. 
In particular,~\citet{DaiShaHeLietal18} exploit a saddle-point formulation for the
multi-step (path) conditions on the consistency between optimal value function and policy. 
Other works~\citep{ChenWang16,Wang17,dai18sbeed,CheLiWan18} consider the (augmented) Lagrangian of the LP characterization of Bellman optimality for the optimal $V$-function, which is slightly different from the LP characterization with respect to the optimal $Q$-function we consider. 
Although slight, the difference between the $V$- and $Q$-LPs is crucial to enable {\em behavior-agnostic} policy optimization in~\estabb. If one were to follow derivations similar to~\estabb but for the $V$-function LP, some form of explicit importance weighting (and thus knowledge of the behavior policy) would be required, as in recent work on off-policy \emph{estimation}~\citep{tang19doubly,uehara19minimax}.
We further note that the application of a regularizer on the dual variable to yield Primal~\estabb is key to transforming the Lagrangian optimization over values and state-action occupancies --- typical in these previous works --- to an optimization over values and policies, which is more common in practice and can help generalization~\citep[e.g.,][]{swaminathan15batch}.

The regularization we employ is inspired by previous uses of regularization in RL.
Adding regularization to MDPs~\citep{NeuJonGom17,geist2019theory}
has been investigated for many different purposes
in the literature,
including exploration~\citep{Defarias00,HaaTanAbbLev17, HaaZhoAbbLev18},
smoothing~\citep{dai18sbeed},
avoiding premature convergence~\citep{NacNorXuSch17},
ensuring tractability~\citep{Todorov07},
and mitigating observation noise~\citep{RubShaTis12,FoxPakTis15}.
We note that the regularization employed by \estabb as a divergence over state-action densities is markedly different from these previous works, which mostly regularize only the {\em action} distributions of a policy conditioned on state.
An approach more similar to ours is given by~\citet{belousov2017f},
which regularizes the max-return objective using an $f$-divergence over state-action densities.
Their derivations are similar in spirit to ours, using the method of Lagrange multipliers, but their result is distinct in a number of key characteristics. 
First, their objective (analogous to ours in~\eqref{eq:dice-nu}) includes not only policy and values but also a number of additional functions, complicating any practical implementation.
Second, their results are restricted to conservative regularization ($\alpha>0$), whereas our findings extend to both exploratory regularization and unregularized objectives ($\alpha\le0$).
Third, the algorithm proposed by~\citet{belousov2017f} follows a bi-level optimization, in which the policy is learned using a separate and distinct objective.
In contrast, our proposed~\estabb uses a single, unified objective for both policy and value learning. 

Lastly, there are a number of works which (like ours) perform policy gradient on off-policy data via distribution correction.
The key differentiator is in how the distribution corrections are computed.
One common method is to re-weight off-policy samples by considering eligibility traces~\citep{Precup00ET,geist2014off}, i.e., compute weights by taking the product of per-action importance weights over a trajectory.
Thus, these methods can suffer from high variance as the length of trajectory increases, known as the ``curse of horizon''~\citep{liu2018breaking}.
A more recent work~\citep{liu19off} attempts to weight updates by estimated state-action distribution corrections. This is more in line with our proposed~\estabb, which implicitly estimates these quantities. One key difference is that this previous work explicitly estimates these corrections, which results in a bi-level optimization, as opposed to our more appealing unified objective.
It is also important to note that both eligibility trace methods and the technique outlined in~\citet{liu19off} require knowledge of the behavior policy. In contrast,~\estabb is a behavior-agnostic off-policy policy gradient method, which may be more relevant in practice.  Compared to existing behavior-agnostic off-policy estimators~\citep{dualdice,zhang2020gendice}, this work considers the substantially more challenging problem of policy optimization.

\section{Experiments}

We present empirical evaluations of \estabb, first in a tabular setting using the Four Rooms domain~\citep{sutton1999between} and then on a suite of continuous control benchmarks using MuJoCo~\citep{TodEreTas12} and OpenAI Gym~\citep{brockman2016openai}.

\begin{figure}[t]
  \begin{center}
  \includegraphics[width=0.99\textwidth]{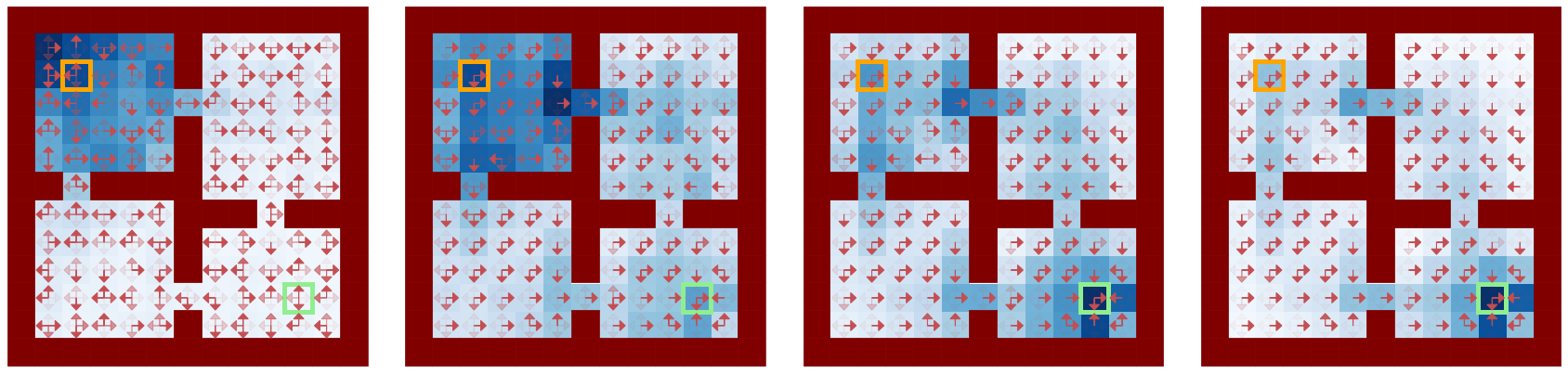} 
  \end{center}
  \caption{We provide a pictoral representation of learned policies $\pi$ and dual variables $\nu$ during training of \estabb on the Four Rooms domain~\citep{sutton1999between}.
  The agent is initialized at the state denoted by an orange square and receives zero reward everywhere except at the target state, denoted by a green square.
  We use a fixed offline experience data distribution that is near-uniform. The progression of learned $\pi$ and $\nu$ during training is shown from left to right.
  The policy $\pi$ is presented via arrows for each action at each square, with the opacity of the arrow determined by the probability $\pi(a|s)$. 
  The dual variables $\nu$ are presented via their Bellman residuals: the opacity of each square is
  determined by the sum of the Bellman residuals at that state $\sum_a (\bellman\nu - \nu)(s,a)$. Recall that for any $\pi$, the Bellman residuals of the optimal $\nustar_\pi$ should satisfy $(\bellman\nustar_\pi - \nustar_\pi)(s,a)=\w(s,a)$. 
As expected, we see that in the beginning of training the residuals are high near the initial state, while towards the end of training the residuals show the preferred trajectories of the near-optimal policy.
  }
  \label{fig:fourrooms}
  \vspace{10mm}
\end{figure}

\subsection{Four Rooms}
We begin by considering the tabular setting given by the Four Rooms environment~\citep{sutton1999between}, in which an agent must navigate to a target location within a gridworld.
In this tabular setting, we evaluate Primal~\estabb (Equations~\ref{eq:dice-nu} and~\ref{eq:thm1}) with $f(x)=\frac{1}{2}x^2$. 
This way, for any $\pi$, the dual value function $\nu$ may be solved exactly using standard matrix operations.
Thus, we train $\pi$ by iteratively solving for $\nu$ via matrix operations and then taking a gradient step for $\pi$.
We collect an off-policy dataset by running a uniformly random policy for 500 trajectories, where each trajectory is initalized at a random state and is of length 10.
This dataset is kept fixed, placing us in the completely offline regime.
We use $\alpha=0.01$ and $\gamma=0.97$.

\begin{wrapfigure}{R}{0.4\textwidth}
  \vspace{-5mm}
  \begin{minipage}{0.4\textwidth}
    \centering
    \includegraphics[width=0.99\columnwidth,height=0.8\columnwidth]{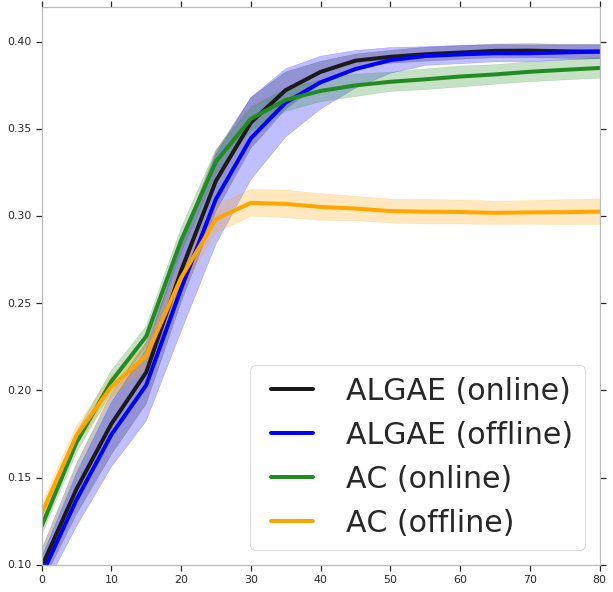}
    \vspace{-10mm}
    \caption{Average per-step reward of policies on Four Rooms learned by~\estabb compared to actor-critic (AC) over training iterations.}
    \label{fig:fourrooms-plot}
  \vspace{-4mm}
  \end{minipage}
\end{wrapfigure}

Graphical depictions of learned policies $\pi$ and dual value functions $\nu$ are presented in Figure~\ref{fig:fourrooms}, where each plot shows $\pi$ and $\nu$ during the first, fourth, seventh, and tenth iterations of training.
The opacity of each square is determined by the Bellman residuals of $\nu$ at that state. Recall that the Bellman residuals of the optimal $\nustar_\pi$ are the density ratios $\w$. 
We see that this is reflected in the learned $\nu$. At the beginning of training, the residuals are high around the initial state. As training progresses, there is a clear path (or paths) of high-residual states going from initial to target state.
Thus we see that $\nu$ learns to properly correct for distribution shifts in the off-policy experience distributions.
The algorithm successfully learns to optimize a policy using these corrected gradients, as shown by the arrows denoting preferred actions of the learned policy.

We further provide quantitative results in Figure~\ref{fig:fourrooms-plot}. We plot the average per-step reward of~\estabb compared to actor-critic in both online and offline settings.
As a point of comparison, the behavior policy used to collect data for the offline setting achieves average reward of $0.03$.
Although all the variants are able to significantly improve upon this baseline, we see that~\estabb performance is only negligibly affected by the type of dataset, while performance of actor-critic degrades in the offline regime.
See~\appref{app:details} for experimental details.

\subsection{Continuous Control}
We now present results of~\estabb on a set of continuous control benchmarks using MuJoCo~\citep{TodEreTas12} and OpenAI Gym~\citep{brockman2016openai}.
We evaluate the performance of Primal~\estabb with $f(x)=\frac{1}{2}x^2$.
Our empirical objective is thus given by
\begin{equation*}
  J(\pi,\nu)\defeq 2\alpha(1-\gamma)\cdot\E_{s_0\sim\Uset,a_0\sim\pi(s_0)}[\nu(s_0,a_0)] + \E_{(s,a,r,s')\sim\Dset,a'\sim\pi(s')}[\delta_{\nu,\pi}(s,a,r,s',a')^2],
\end{equation*}
where $\delta_{\nu,\pi}$ is a single-sample estimate of the Bellman residual:
\begin{equation}
  \label{eq:delta}
  \delta_{\nu,\pi}(s,a,r,s',a') \defeq r + \gamma\nu(s',a') - \nu(s,a).
\end{equation}
We note that using a single-sample estimate for the Bellman residual in general leads to biased gradients, although previous works have found this to not have a significant practical effect in these domains~\citep{valuedice}.
We make the following additional practical modifications:
\begin{itemize}
    \item As entropy regularization has been shown to be important on these tasks~\citep{tpcl,HaaZhoAbbLev18}, we augment the rewards with a causal entropy term; \ie, replace $r$ in~\eqref{eq:delta} with $r - \tau\log\pi(a'|s')$, where $\tau$ is learned adaptively as in~\citet{HaaZhoAbbLev18}.
    \item As residual learning is known to be hard in function approximation settings~\citep{baird1995residual}, we replace $\nu(s',a')$ in~\eqref{eq:delta} with a mixture $\eta\cdot\nu(s',a') + (1-\eta)\cdot\overline{\nu}(s',a')$ where $\overline{\nu}(s',a')$ is a target value calculated as in~\citet{HaaZhoAbbLev18}.
      We use $\eta=0.05$.
    \item If $\nu$ is fully optimized, $\delta$ will be the density ratio $\w$, and thus always non-negative. However, during optimization, this may not always hold, which can affect policy learning. Thus, when calculating gradients of this objective with respect to $\pi$, we clip the value of $\delta$ from below at 0.
\end{itemize}
For training we parameterize $\pi$ and $\nu$ using neural networks and perform alternating stochastic gradient descent on their parameters.

We present our results in Figure~\ref{fig:mujoco-results}.
We see that~\estabb can perform well in these settings, achieving performance that is roughly competitive with the state-of-the-art SAC and TD3 algorithms.
There are potentially more possible improvements to these practical results by choosing $f$ (or $\fstar$) appropriately. In~\appref{app:more-results}, we conduct a preliminary investigation into polynomial $f$, showing that certain polynomials can at times provide better performance than $f(x)=\frac{1}{2}x^2$. 
A more detailed and systematic study of this and other design choices for implementing~\estabb is an interesting avenue for future work.
\begin{figure}[t]
  \setlength{\tabcolsep}{0pt}
  \renewcommand{\arraystretch}{0.7}
  \begin{center}
  	\begin{tabular}{ccc}
      \tiny HalfCheetah & \tiny Hopper & \tiny Walker2d \\
    \includegraphics[width=0.27\textwidth]{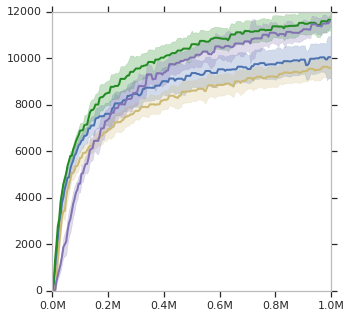} &
    \includegraphics[width=0.27\textwidth]{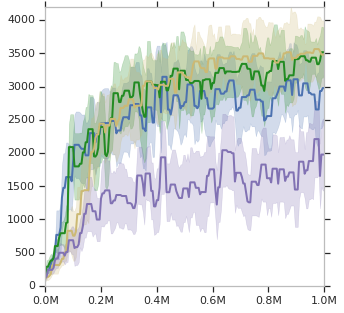} &
    \includegraphics[width=0.27\textwidth]{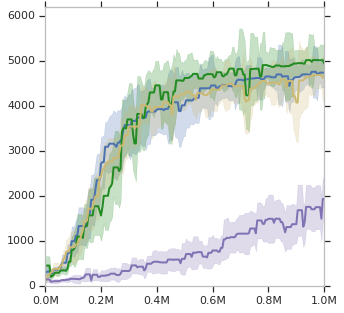}
    \end{tabular}
  	\begin{tabular}{cc}
      \tiny Ant & \tiny Humanoid \\
    \includegraphics[width=0.27\textwidth]{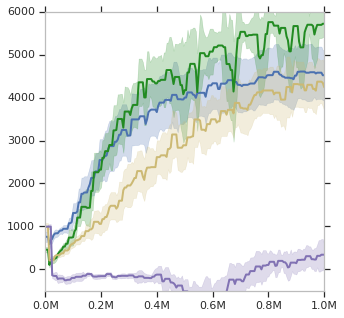} &
    \includegraphics[width=0.27\textwidth]{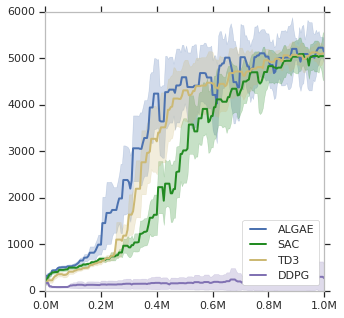}
    \end{tabular}
  \end{center}
  \caption{We show the results of~\estabb compared to SAC~\citep{HaaZhoAbbLev18}, TD3~\citep{fujimoto2018addressing}, and DDPG~\citep{lillicrap2015continuous}. 
  We follow the evaluation protocol of~\citet{fujimoto2018addressing}, plotting the performance of 10 randomly seeded training runs, with shaded region representing half a standard deviation and $x$-axis given by environment steps.
  There are potentially better results achievable by using a choice of $f$ other than $f(x)=\frac{1}{2}x^2$ for~\estabb; see~\appref{app:more-results} for a preliminary investigation.
  }
  \label{fig:mujoco-results}
  \vspace{10mm}
\end{figure}

\section{Conclusion}

We have introduced an~\estname, or \estabb, for \emph{behavior-agnostic}, \emph{off-policy} policy improvement in reinforcement learning.
Based on a linear programming characterization of the $Q$-function,
we derived the new approach from a Lagrangian saddle-point formulation.
The resulting algorithm,
\estabb, automatically compensates for the distribution shift 
in collected off-policy data, 
and achieves an estimate of the on-policy policy gradient 
using this off-policy data.

\subsubsection*{Acknowledgments}
We thank Marc Bellemare, Nicolas Le Roux, George Tucker, Rishabh Agarwal, Dibya Ghosh, and the rest of the Google Brain team for insightful thoughts and discussions.

\bibliographystyle{plainnat}
\bibliography{ref,../../../bibfile/bibfile}


\clearpage
\newpage

\appendix
\onecolumn

\begin{appendix}

\thispagestyle{plain}
\begin{center}
{\huge Appendix}
\end{center}

\section{Proof Details}\label{appendix:proof_details} 

We follow the notations in main text.  Abusing notation slightly, we will use $\sum$ and $\int$ interchangeably. 
\paragraph{\thmref{theorem:q_lp}}
{\itshape
 Given a policy $\pi$,
  the average return of $\pi$ may be expressed in the primal and dual forms as \\
  \begin{minipage}{0.45\textwidth}
\begin{align}
\min_{\nu:\S\times\A\rightarrow \Nset}& \jprimal(\pi, \nu) \defeq \rbr{1-\gamma}\EE_{\mu_0\pi}\sbr{\nu\rbr{s_0, a_0}}\nonumber \\
  \st~~~~ &\nu\rbr{s, a}\ge \bellman\nu(s, a), \qquad\quad \eqref{eq:q_lp} \nonumber \\
  & \forall \rbr{s, a}\in S\times A,\nonumber
\end{align}
\end{minipage}
~and,
\begin{minipage}{0.45\textwidth}
\begin{align}
  \max_{\rho:\S\times\A\rightarrow \RR_+}& \jdual(\pi, \rho) \defeq \EE_{\rho}\sbr{r\rbr{s, a}}\nonumber\\
  \st~~~~ &\rho\rbr{s, a} =  \bellmant\rho(s, a), \quad \eqref{eq:rho_lp} \nonumber \\
  & \forall \rbr{s, a}\in S\times A,\nonumber 
\end{align}
\end{minipage} \\
respectively. Under Assumptions~\ref{assumption:bounded_rewards} and \ref{assumption:mdp_reg}, strong duality holds, \ie, $\jprimal(\pi, \nustar_\pi) = \jdual(\pi, \rho^*_\pi)$ for optimal solutions $\nustar_\pi,\rho^*_\pi$. 
  The optimal primal satisfies $\nustar_\pi\rbr{s, a} = \qpi\rbr{s, a}$ for all $\rbr{s, a}$ reachable by $\pi$ 
  and the optimal dual $\rho^*_\pi$ is $\visitpi$.

}


\begin{proof}
Recall that $\bellman$ is monotonic; that is, given two bounded functions $\nu_1$ and $\nu_2$, $\nu_1 \ge \nu_2$ implies $\bellman\nu_1 \ge \bellman \nu_2$.
Therefore, for any feasbile $\nu$, we have $\nu\ge \rbr{\bellman} \nu\ge \rbr{\bellman}^2 \nu \ge\rbr{\bellman}^3 \nu\ge\ldots\ge \rbr{\bellman}^\infty \nu = Q_\pi$, proving the first claim. 

The duality of the linear program~\eqref{eq:q_lp} can be obtained as
\begin{eqnarray*} 
\max_{\rho:\S\times\A\rightarrow\RR_+}&& \EE_{\rho}\sbr{r\rbr{s, a}} \nonumber \\
\st && \rho\rbr{s', a'}  = \underbrace{\gamma\sum_{s, a}{ \pi\rbr{a'|s'}T\rbr{s'|s, a}\rho\rbr{s, a}} + \rbr{1 - \gamma}\mu_0\rbr{s'}\pi\rbr{a'|s'}}_{\bellmant\rho\rbr{s', a'}},\\
&&\forall \rbr{s', a'}\in \S\times\A \,, \nonumber
\end{eqnarray*}
which is exactly \eqref{eq:rho_lp}.
Notice that the equality constraints correspond to a system of linear equations of dimension $\abr{\S}\times\abr{\A}$:
$(I - \gamma \rbr{P_\pi}^\top) \rho = (1-\gamma)(\mu_0\pi)$,
where $(\mu_0\pi)(s,a) = \mu_0(s)\pi(a|s)$,
$P_\pi\rbr{s', a'|s, a} = \pi\rbr{a'|s'}T\rbr{s'|s, a}$,
and $I$ is the identity matrix.
Since the matrix $I - \gamma \rbr{P_\pi}^\top$ is nonsingular, the system has a unique solution given by
\begin{equation*} 
\rho^* = \rbr{1 - \gamma}\rbr{I - \gamma \rbr{P_\pi}^\top}^{-1}\rbr{\mu_0\pi}\,.
\end{equation*}
Finally, when $\gamma\in [0, 1)$, we can rewrite $\rbr{I - \gamma \rbr{P_\pi}^\top}^{-1} =  \sum_{t=0}^\infty \gamma^t \rbr{P_\pi}^t$, so $\rho^* = \rbr{1 - \gamma}\sum_{t=0}^\infty \gamma^t \rbr{P_\pi}^t\rbr{\mu_0\pi} = \visitpi$, as desired.
\end{proof}


\paragraph{\thmref{thm:effect_reg}}
{\itshape
Under Assumptions~\ref{assumption:bounded_rewards}--\ref{assumption:convex_f},
the solution to \eqref{eq:q_lagrangian_reg} is given by,
\begin{eqnarray*}
  \nustar_\pi\rbr{s, a} &=& -\alpha f'\rbr{\w\rbr{s, a}}+\bellman\nustar_\pi\rbr{s,a}, \\
  \zetastar_\pi\rbr{s, a} &=& \w\rbr{s, a}.
\end{eqnarray*}
The optimal value is $L\rbr{\nustar_\pi, \zetastar_\pi;\pi} = \E_{\visitpi}[r(s,a)]-\alpha\fdiv(\visitpi \| \visitrb)$. 
}
\begin{proof}
By Fenchel duality, we have
\begin{align*}
\max_{\zeta:\Scal\times\Acal\rightarrow \RR}\,\, & \EE_{\visitrb}\sbr{\zeta\rbr{s, a} {\rbr{\bellman \nu - \nu}\rbr{s, a}}}- \alpha\,\EE_{\visitrb}\sbr{f\rbr{\zeta\rbr{s, a}}} \\
&= \alpha\,\EE_{\visitrb}\sbr{\f_*\rbr{\frac{1}{\alpha}\rbr{\bellman \nu - \nu}\rbr{s, a}}}.
\end{align*}
Plugging this into \eqref{eq:q_lagrangian_reg}, we have
  \begin{equation}
    \label{eq:l-zstar}
  L(\nu,\zetastar_\pi;\pi)=
\min_{\nu:\Scal\times\Acal\rightarrow \Nset} \,\, \rbr{1 - \gamma}\EE_{\mu_0\pi}\sbr{\nu\rbr{s_0, a_0}} + \alpha\EE_{\visitrb}\sbr{f_*\rbr{\frac{1}{\alpha}\rbr{\bellman \nu - \nu}\rbr{s, a}}}.
  \end{equation}
  To investigate the optimality, we apply the change-of-variable, $x\rbr{s, a} \defeq \frac{1}{\alpha} \rbr{\bellman \nu - \nu}\rbr{s, a}$.  
  Let $\beta_t\rbr{s} = P\rbr{s = s_t|s_0\sim \mu_0, \cbr{a_i}_{i=0}^t\sim \pi}$, 
  and consider the first expectation in~\eqref{eq:l-zstar}:
  \begin{flalign*}
    (1-\gamma)\EE_{\init\pi}[\nu(s_0,a_0)] &&
  \end{flalign*}
\begin{eqnarray*}
  ~~ &=& (1-\gamma)\sum_{t=0}^\infty \gamma^t \EE_{s\sim\beta_t,a\sim\pi(s)}[\nu(s,a)] - (1-\gamma)\sum_{t=0}^\infty \gamma^{t+1}\EE_{s'\sim\beta_{t+1},a'\sim\pi(s')}[\nu(s',a')] \\
  &=& \EE_{(s,a)\sim\visitpi}[\nu(s,a) - \gamma \EE_{s'\sim T(s,a),a'\sim\pi(s')}[\nu(s',a')]] \\
  &=& \EE_{(s,a)\sim\visitpi}[r(s,a)] + \E_{(s,a)\sim\visitpi}[\nu(s,a) - r(s,a) - \gamma \EE_{s'\sim T(s,a),a'\sim \pi(s')}[\nu(s',a')]] \\ 
  &=& \EE_{(s,a)\sim\visitpi}[r(s,a)] - \alpha \EE_{(s,a)\sim\visitpi}[x(s,a)].
\end{eqnarray*}
  Let $\mathcal{C}$ denote the set of functions $x$ in the image of $\rbr{\bellman \nu - \nu}$ for $\nu:\S\times\A\to\Nset$. 
  Therefore, the change of variables yields the following re-formulation of $L$:
\begin{eqnarray*}
  L(\nustar_\pi,\zetastar_\pi;\pi) &=&\min_{x\in\mathcal{C}} \EE_{(s,a)\sim\visitpi}\sbr{r\rbr{s, a}} - \alpha \EE_{(s,a)\sim\visitpi}\sbr{x\rbr{s, a}} + \alpha \EE_{\visitrb}\sbr{f_*\rbr{x\rbr{s, a}}}\\
  &=&\EE_{(s,a)\sim\visitpi}[r(s,a)] - \alpha\rbr{\max_{x\in \mathcal{C}} \EE_{\visitpi}\sbr{x(s, a)} - \EE_{\visitrb}\sbr{f_*\rbr{x\rbr{s, a}}}}
\end{eqnarray*}
  Note that, ignoring the restriction of $x$ to $\mathcal{C}$ (for now),  the optimal $\xstar_\pi$ satisfies $\fstar'(\xstar_\pi(s,a))=\w(s,a)$. 
  By Assumption~\ref{assumption:convex_f}, we have that $\sbr{\rbr{f_*}'}^{-1}\rbr{\cdot}$ exists, and equals $f'(\cdot)$. Thus, we have $\xstar_\pi(s,a) = f'(\w(s,a))$ for all $s,a$.
  Due to the~\asmpref{assumption:bounded_ratios} that $\w$ is bounded, we have that $\xstar_\pi$ is bounded by ${f'(\Wmax)}$ and thus $\xstar_\pi\in\mathcal{C}$.  
  Therefore, by definition of the $f$-divergence, we have
\begin{equation}
  L(\nustar_\pi,\zetastar_\pi;\pi) = \EE_{(s,a)\sim\visitpi}[r(s,a)] - \alpha\fdiv(\visitpi || \visitrb),
\end{equation}
as desired.

To characterize $\nustar_\pi$, we note,
\begin{equation}
  x^*\rbr{s, a} = f'(\w(s,a)) \Rightarrow \nustar_\pi\rbr{s, a} = \bellman \nustar_\pi(s,a) -  \alpha f'(\w(s,a)).
\end{equation}

  To characterize the optimal dual $\zetastar_\pi\rbr{s, a}$, we have
$$
  \zetastar_\pi\rbr{s, a} = 
  \argmax_\zeta \zeta \cdot \xstar_\pi(s,a) - f(\zeta) = \fstar'(\xstar_\pi(s,a)) = \w(s,a)
$$
  where the second equality comes from the fact that $f'(\zetastar_\pi(s,a))=\xstar_\pi(s,a)\Rightarrow \zetastar_\pi(s,a) =\fstar'(\xstar_\pi(s,a))$. 

\end{proof}

\subsection{Extension to $\gamma=1$}\label{appendix:proof_details_undiscounted} 

We follow similar steps in the previous section, and extend the analysis to the undiscounted case.
Different from the discounted case, the primal variable has an extra scalar component, denoted $\lambda$.
Under~\asmpref{assumption:mdp_reg}, the Markov chain induced by $\pi$ is ergodic, with a unique invariant distribution $\visitpi$ and mixing time $\tmix$.  The mixing time quantifies the number of steps for the state distribution in the induced Markov chain to be close to $\visitpi$, measured by total variation~\citep{LevPer17}.  Precisely,
\[
\tmix \defeq \min\cbr{t ~:~ \sup_{(s,a)\in\SAset}\tvar{\delta_{s,a} T_\pi^t - \visitpi} \le 1/4}\,,
\]
where $\delta_{s,a}$ is the delta measure concentrated on the state-action pair $(s,a)$, and $T_\pi\rbr{s', a'|s, a} \defeq T\rbr{s'|s, a}\pi\rbr{a'|s'}$.  The range of the primal variables $\nu$ is changed to $\Nset \defeq [-C\tmix,C\tmix]$ where $C=2\Rmax + |\alpha| \cdot (\f'(\Wmax)-\f'(0))$.

We begin with the $Q$-LP characterization of $\qpi$-values and visitations $\visitpi$.
\begin{theorem} \label{theorem:q_lp_ud}
Under \asmpref{assumption:mdp_reg},
the average return of $\pi$ may be expressed in the primal and dual forms as \\
\begin{minipage}{0.45\textwidth}
\begin{align}
\min_{\lambda\in\R,\nu:\S\times\A\rightarrow \Nset}& \jprimal(\pi, \nu, \lambda) \defeq \lambda \nonumber \\
  \st~~~~ &\nu\rbr{s, a}\ge -\lambda + \bellman\nu(s, a), \\
  & \forall \rbr{s, a}\in S\times A,\nonumber
\end{align}
\end{minipage}
~and,
\begin{minipage}{0.45\textwidth}
\begin{align}
  \max_{\rho:\S\times\A\rightarrow \RR_+}& \jdual(\pi, \rho) \defeq \EE_{\rho}\sbr{r\rbr{s, a}}\nonumber\\
  \st~~~~ &\rho\rbr{s, a} =  \bellmant\rho(s, a), \\
  & \forall \rbr{s, a}\in S\times A,\nonumber \\
  & \sum\rho\rbr{s,a} = 1\,, \nonumber
\end{align}
\end{minipage} \\
respectively. Under further \asmpref{assumption:bounded_rewards}, strong duality holds, \ie, $\jprimal(\pi, \lambdastar_\pi, \nustar_\pi) = \jdual(\pi, \rho^*_\pi)$ for optimal solutions $(\lambdastar_\pi,\nustar_\pi,\rho^*_\pi)$. 
As in the discounted case, we have $\rho^*_\pi = \visitpi$.
Unlike the discounted case, there are infinitely many solutions for $\nustar_\pi$, as any optimal solution for $\nu$ remains optimal with a constant offset.
\end{theorem}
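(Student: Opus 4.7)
The plan is to mirror the strategy used for Theorem~\ref{theorem:q_lp}, replacing the contraction-based analysis of the discounted Bellman operator with the average-reward (Poisson) analogue enabled by the ergodicity half of Assumption~\ref{assumption:mdp_reg}. Two structural facts drive everything. First, when $\gamma=1$ the operator $\bellman$ satisfies $\bellman(\nu+c\cdot\mathbf{1}) = \bellman\nu + c\cdot\mathbf{1}$, so constant shifts commute with $\bellman$ and preserve the inequality $\nu \ge -\lambda + \bellman\nu$. Second, ergodicity of the chain induced by $\pi$ yields a unique invariant distribution $\visitpi$, and the Poisson equation $\nu + \lambda\cdot\mathbf{1} = r + P_\pi\nu$ admits a solution $(\nustar_\pi,\lambdastar_\pi)$ with $\lambdastar_\pi = \E_{\visitpi}[r]$ and $\nustar_\pi$ determined only up to an additive constant. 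Standard mixing-time bounds on the Poisson solution then let me pick a representative with $\|\nustar_\pi\|_\infty \le C\tmix$, justifying the range $\Nset = [-C\tmix, C\tmix]$.

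For the primal, my first step is to exhibit feasibility of a Poisson pair $(\nustar_\pi,\lambdastar_\pi)$: it satisfies the constraint with equality, so it is feasible and certifies $\lambdastar_\pi \le \E_{\visitpi}[r]$. For the reverse inequality I would iterate the feasibility constraint; monotonicity of $\bellman$, together with its commutation with constants, yields
\begin{equation*}
\nu(s,a) \ge -n\lambda + \bellman^{n}\nu(s,a)
\end{equation*}
for every $n\ge 1$. Dividing by $n$, letting $n\to\infty$, and using ergodicity to obtain $\tfrac{1}{n}\bellman^{n}\nu(s,a) \to \E_{\visitpi}[r]$ (the $\tfrac{1}{n}\nu$ and $\tfrac{1}{n}\E[\nu(s_n,a_n)]$ tails vanishing by boundedness of $\nu$), this forces $\lambda \ge \E_{\visitpi}[r]$ for any feasible $(\nu,\lambda)$, closing the primal analysis.

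For the dual, the linear equality $\rho = \bellmant\rho$ together with the normalization $\sum\rho = 1$ has at most one solution by Assumption~\ref{assumption:mdp_reg}, and that solution is precisely $\visitpi$; its objective value is $\E_{\visitpi}[r]$. Primal and dual optima thus coincide, which simultaneously proves strong duality and identifies $\rho^*_\pi = \visitpi$. The claimed non-uniqueness of $\nustar_\pi$ is then immediate from the shift-commutation property: if $(\nustar_\pi,\lambdastar_\pi)$ is optimal, so is $(\nustar_\pi + c\cdot\mathbf{1},\lambdastar_\pi)$ for every scalar $c$ small enough that $\nustar_\pi + c\cdot\mathbf{1}\in\Nset$.

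The main obstacle I anticipate is the $n\to\infty$ step in the primal lower bound: in the continuous-state setting one cannot lean on discrete ergodic theory, and must instead use the mixing-time form of Assumption~\ref{assumption:mdp_reg} together with the kind of continuous-state analysis surveyed by~\citet{MeyTwe12,LevPer17} to control the Cesaro average of $\bellman^{n}\nu$ and to exhibit a bounded bias representative lying in $\Nset$. A smaller technical point is rechecking LP strong duality in the presence of the added scalar variable $\lambda$ and the normalization $\sum\rho=1$; I would recast the primal-dual pair in the generic form used by~\citet[Proposition~2.1]{EkeTem99}, exactly as in the proof of Theorem~\ref{theorem:q_lp}, so that the same duality statement applies directly.
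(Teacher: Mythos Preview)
The paper does not actually supply a proof of this theorem: after stating Theorem~\ref{theorem:q_lp_ud} the appendix moves directly to the regularized Lagrangian and Theorem~\ref{thm:effect_reg_ud}, treating the average-reward LP characterization as a known fact (in the spirit of~\citet{puterman1994markov}). So there is nothing to compare against line-by-line.

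That said, your proposal is a correct and standard argument. The iteration step $\nu \ge -n\lambda + \bellman^{n}\nu$ holds by monotonicity of $\bellman$ together with the shift-commutation $\bellman(\nu+c\mathbf{1})=\bellman\nu+c\mathbf{1}$, and dividing by $n$ and invoking the ergodic average indeed forces $\lambda\ge\E_{\visitpi}[r]$ for every feasible pair; the Poisson pair then pins the primal optimum at $\E_{\visitpi}[r]$. On the dual side, Assumption~\ref{assumption:mdp_reg} gives uniqueness of the fixed point of $\bellmant$, so the single feasible $\rho$ is $\visitpi$ with objective $\E_{\visitpi}[r]$, and equality of primal and dual optima is exactly strong duality. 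The shift-invariance claim for $\nustar_\pi$ is immediate as you say. Your plan to place a representative of the Poisson solution inside $\Nset=[-C\tmix,C\tmix]$ via a mixing-time span bound is also the right move; note that the paper carries out precisely this kind of estimate inside the proof of Theorem~\ref{thm:effect_reg_ud} (the $\nustarmax\le 2C\tmix$ argument), so you can reuse that machinery verbatim with $\alpha=0$. The only cosmetic point: your limit statement $\tfrac{1}{n}\bellman^{n}\nu\to\E_{\visitpi}[r]$ is shorthand for the Ces\`aro convergence of the accumulated rewards plus the vanishing of $\tfrac{1}{n}\E[\nu(s_n,a_n)]$; just make that explicit when you write it out.
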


Given these LP formulations, the Primal and Fenchel~\estabb optimization problems for $\gamma=1$ are given by
\begin{equation}
  \max_\pi\min_{\lambda\in\R,\nu:\Sset\times\Aset\to\Nset} \jdice(\pi,\lambda,\nu) \defeq \lambda + 
  \alpha\cdot\EE_{(s,a)\sim\visitrb}[\fstar((-\lambda + \bellman \nu(s,a) - \nu(s,a)) / \alpha)],
\end{equation}
and
\begin{multline}
  \label{eq:undisc-lagrange}
  \max_\pi\min_{\lambda\in\R,\nu:\Sset\times\Aset\to\Nset}\max_{\zeta:\Sset\times\Aset\to\RR} L\rbr{\lambda, \nu, \zeta; \pi}\defeq \lambda + \\ \EE_{(s,a)\sim\visitrb}\sbr{\zeta(s, a)(-\lambda + \bellman \nu(s,a) - \nu(s,a))} - \alpha\cdot\EE_{(s,a)\sim\visitrb}\sbr{f\rbr{\zeta\rbr{s, a}}},
\end{multline}
respectively.  We will show in~\thmref{thm:effect_reg_ud} that the optimal $\zeta^*$ in Fenchel~\estabb automatically satisfies the constraint $\zeta\in \Zset \defeq \cbr{\zeta\ge 0, \EE_{d^{\Dcal}}\sbr{\zeta\rbr{s, a}} = 1}$, avoiding the nontrivial self-normalization step in \citet{liu2018breaking}.
In fact, by comparing Fenchel~\estabb with GenDICE~\citep{zhang2020gendice}, the GenDICE objective with unit penalty weight could be understood as primal variables regularized Lagrangian of $Q$-LP, while the Fenchel~\estabb is derived by regularizing the dual variable in the Lagrangian of $Q$-LP. 

We have the following analogue to Theorem~\ref{thm:effect_reg}.
\begin{theorem} \label{thm:effect_reg_ud}
Under Assumptions~\ref{assumption:bounded_rewards}--\ref{assumption:convex_f}, the solution to
  \eqref{eq:undisc-lagrange} is given by,
\begin{eqnarray*}
  \nustar_\pi\rbr{s, a} &=& -\lambdastar_\pi-\alpha f'\rbr{\w\rbr{s, a}}+\bellman\nustar_\pi(s,a), \\
  \zetastar_\pi\rbr{s, a} &=& \w\rbr{s, a}.
\end{eqnarray*}
The optimal value is $L\rbr{\lambdastar_\pi, \nustar_\pi, \zetastar_\pi;\pi} = \E_{\visitpi}[r(s,a)]-\alpha\fdiv(\visitpi \| \visitrb)$. 
\end{theorem}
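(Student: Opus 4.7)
The plan is to mirror the proof of Theorem~\ref{thm:effect_reg} in the discounted setting, making the adjustments required by the extra scalar primal variable $\lambda$ and the one-dimensional kernel of $I-\bellman$ that appears when $\gamma=1$.

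First, I would apply Fenchel conjugacy to the inner maximization over $\zeta$ in~\eqref{eq:undisc-lagrange}. Using Assumption~\ref{assumption:convex_f} (strict convexity of $\fstar$ and continuity of $\fstar'$), the pointwise identity $\max_\zeta\,\zeta y-f(\zeta)=\fstar(y)$ gives
\begin{equation*}
L(\lambda,\nu,\zetastar_\pi;\pi)=\lambda+\alpha\,\E_{\visitrb}\bigl[\fstar\bigl((-\lambda+\bellman\nu-\nu)/\alpha\bigr)\bigr],
\end{equation*}
with pointwise maximizer $\zetastar_\pi=\fstar'((-\lambda+\bellman\nu-\nu)/\alpha)$. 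Strong duality for this inner problem follows from the boundedness of the domain $\Nset$ and the standard argument used in the discounted case via \citet{EkeTem99}.

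Next, I would introduce the change of variables $x(s,a)\defeq\frac{1}{\alpha}(-\lambda+\bellman\nu-\nu)(s,a)$. Because $\visitpi$ is the unique fixed point of $\bellmant$ when $\gamma=1$ (Theorem~\ref{theorem:q_lp_ud} under Assumption~\ref{assumption:mdp_reg}), stationarity yields $\E_{\visitpi}[\bellman\nu-\nu]=\E_{\visitpi}[r]$ for every bounded $\nu$. Taking expectation of the change-of-variables equation against $\visitpi$ then produces the identity
\begin{equation*}
\lambda=\E_{\visitpi}[r(s,a)]-\alpha\,\E_{\visitpi}[x(s,a)],
\end{equation*}
which substituted back gives
\begin{equation*}
L(\lambda,\nu,\zetastar_\pi;\pi)=\E_{\visitpi}[r(s,a)]-\alpha\,\E_{\visitpi}[x(s,a)]+\alpha\,\E_{\visitrb}[\fstar(x(s,a))].
\end{equation*}
This replaces the telescoping of $(1-\gamma)\E_{\init\pi}[\nu]$ that appeared in the discounted proof; here it is the Fredholm-type solvability condition for $(I-\bellman)\nu=-\lambda-\alpha x$ that plays the same role.

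Finally, I would minimize pointwise over $x$. Setting the derivative to zero gives $\fstar'(x^*(s,a))=\w(s,a)$, and by Assumption~\ref{assumption:convex_f} the inverse of $\fstar'$ equals $f'$, so $x^*(s,a)=f'(\w(s,a))$; Assumption~\ref{assumption:bounded_ratios} then bounds $x^*$. Plugging in yields $\lambdastar_\pi=\E_{\visitpi}[r]-\alpha\E_{\visitpi}[f'(\w)]$, and the Fenchel identity $\fstar(f'(y))=y\,f'(y)-f(y)$ collapses the objective to $L^*=\E_{\visitpi}[r]-\alpha\fdiv(\visitpi\|\visitrb)$, matching the claim. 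The change of variables then recovers $\nustar_\pi(s,a)=-\lambdastar_\pi-\alpha f'(\w(s,a))+\bellman\nustar_\pi(s,a)$, and the inner Fenchel relation gives $\zetastar_\pi=\fstar'(x^*)=\w$.

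The main technical obstacle I expect is verifying the change of variables: namely, that for $x^*=f'(\w)$ and $\lambdastar_\pi$ above there exists a corresponding $\nustar_\pi$ lying in $\Nset$. In the discounted case $I-\gamma\bellman$ is invertible, so $\nu$ is an explicit linear function of $(\lambda,x)$; here $I-\bellman$ has a one-dimensional kernel spanned by constants, so solvability of $(I-\bellman)\nu=-\lambdastar_\pi-r-\alpha x^*$ requires the right-hand side to have zero mean against $\visitpi$, which is exactly the identity for $\lambdastar_\pi$ just derived. The solution is then unique only up to an additive constant (consistent with the non-uniqueness flagged in Theorem~\ref{theorem:q_lp_ud}), and ergodicity plus the mixing-time-based range $\Nset=[-C\tmix,C\tmix]$ lets one choose this constant so that $\nustar_\pi\in\Nset$.
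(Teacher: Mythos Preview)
Your proposal is correct and follows essentially the same route as the paper's proof: Fenchel duality on $\zeta$, the change of variables to $x$, the stationarity identity $\E_{\visitpi}[\bellman\nu-\nu]=\E_{\visitpi}[r]$ in place of the discounted telescoping, pointwise optimization over $x$, and a mixing-time argument to place $\nustar_\pi$ in $\Nset$. One small slip: since $\bellman$ already includes the reward, the solvability equation should read $(I-\bellman)\nustar_\pi=-\lambdastar_\pi-\alpha x^*$, without the extra $-r$.
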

\begin{proof}
By Fenchel duality, we have
  \begin{multline}
    \max_{\zeta:\Scal\times\Acal\rightarrow \RR}\,\, \EE_{\visitrb}\sbr{\zeta\rbr{s, a} (-\lambda + \bellman \nu(s,a) - \nu(s,a))}- \alpha\EE_{\visitrb}\sbr{f\rbr{\zeta\rbr{s, a}}} \\ = \alpha\EE_{\visitrb}\sbr{\f_*\rbr{\frac{1}{\alpha}\rbr{-\lambda + \bellman \nu(s,a) - \nu(s,a)}}}.
  \end{multline}
Plugging this into \eqref{eq:undisc-lagrange}, we have
\begin{equation} \label{eq:l-zstar-ud}
L(\lambda,\nu,\zetastar_\pi;\pi)=
  \lambda + \alpha \, \EE_{\visitrb}\sbr{f_*\rbr{\frac{1}{\alpha}\rbr{-\lambda + \bellman \nu(s,a) - \nu(s,a)}}}.
\end{equation}
To investigate the optimality, we apply the change-of-variable, 
  \begin{equation*}
    x\rbr{s, a} \defeq \frac{1}{\alpha} \rbr{-\lambda + \bellman \nu(s,a) - \nu(s,a)}.
  \end{equation*}
We have,
\begin{eqnarray}
\EE_{\visitpi}\sbr{x(s,a)}
  &=& \alpha^{-1} \EE_{\visitpi}\sbr{-\lambda + \bellman\nu(s,a) - \nu(s,a)} \nonumber \\
  &=& \alpha^{-1} \EE_{\visitpi}\sbr{\rbr{P_\pi \nu - \nu}\rbr{s,a} + r\rbr{s,a} - \lambda} \nonumber \\
  &=& \alpha^{-1} \EE_{\visitpi}\sbr{r\rbr{s,a} - \lambda}\,, \label{eq:x-lambda}
\end{eqnarray}
where the last equality holds because
\[
  \EE_{\visitpi}\sbr{\rbr{P_\pi \nu - \nu}\rbr{s,a}}
= \EE_{\visitpi}\sbr{P_\pi \nu\rbr{s,a}} - \EE_{\visitpi}\sbr{\nu\rbr{s,a}}
= \EE_{\visitpi}\sbr{\nu\rbr{s,a}} - \EE_{\visitpi}\sbr{\nu\rbr{s,a}}
= 0\,.
\]
Therefore,
\begin{eqnarray*}
  L(\lambda, \nu,\zetastar_\pi;\pi)
&=& \lambda + \EE_{\visitpi}\sbr{r\rbr{s,a} - \lambda} - \EE_{\visitpi}\sbr{r\rbr{s,a} - \lambda}\\
& & \,\,\,\,\, + \, \alpha\EE_{\visitrb}\sbr{f_*\rbr{\frac{1}{\alpha}\rbr{-\lambda + \bellman \nu(s,a) - \nu(s,a)}}} \\
&=& \EE_{\visitpi}\sbr{r\rbr{s,a}} - \alpha \EE_{\visitpi}\sbr{x(s,a)} + \alpha\EE_{\visitrb}\sbr{f_*\rbr{x\rbr{s,a}}}\,.
\end{eqnarray*}

Let $\mathcal{C}$ denote the set of functions $x$ in the image of $-\lambda + \rbr{\bellman \nu - \nu}$ for $\nu:\S\times\A\to\Nset$. 
Therefore, the change of variables yields the following re-formulation of $L$:
\begin{eqnarray*}
  L(\lambda^*_\pi, \nustar_\pi,\zetastar_\pi;\pi)
&=&\min_{x\in\mathcal{C}} \EE_{(s,a)\sim\visitpi}\sbr{r\rbr{s, a}} - \alpha \EE_{(s,a)\sim\visitpi}\sbr{x\rbr{s, a}} + \alpha \EE_{\visitrb}\sbr{f_*\rbr{x\rbr{s, a}}}\\
&=&\EE_{(s,a)\sim\visitpi}[r(s,a)] - \alpha\rbr{\max_{x\in \mathcal{C}} \EE_{\visitpi}\sbr{x(s, a)} - \EE_{\visitrb}\sbr{f_*\rbr{x\rbr{s, a}}}}\,.
\end{eqnarray*}
  Note that, ignoring the restriction of $x$ to $\mathcal{C}$ (for now),  the optimal $\xstar_\pi$ satisfies $\fstar'(x(s,a))=\w(s,a)$. 
  By Assumption~\ref{assumption:convex_f}, we have that $\sbr{\rbr{f_*}'}^{-1}\rbr{\cdot}$ exists, and it is given by $f'(\cdot)$. Thus, we have $\xstar_\pi(s,a) = f'(\w(s,a))$ for all $s,a$.
  Due to the~\asmpref{assumption:bounded_ratios} that $\w$ is bounded, we have that $\xstar_\pi$ is bounded by ${f'(\Wmax)}$ and thus $\xstar_\pi\in\mathcal{C}$.  
  Therefore, by definition of the $f$-divergence, we have
\begin{equation}
  \label{eq:undisc-opt-value}
  L(\lambdastar_\pi,\nustar_\pi,\zetastar_\pi;\pi) = \EE_{(s,a)\sim\visitpi}[r(s,a)] - \alpha\fdiv(\visitpi || \visitrb),
\end{equation}
as desired.

To characterize $\nustar_\pi$, we note,
\begin{equation}
  x^*\rbr{s, a} = f'(\w(s,a)) \Rightarrow \nustar_\pi\rbr{s, a} = -\lambdastar_\pi + \bellman \nustar_\pi(s,a) -  \alpha f'(\w(s,a)).
\end{equation}

To characterize the optimal dual $\zetastar_\pi\rbr{s, a}$, we have
$$
  \zetastar_\pi\rbr{s, a} = 
  \argmax_\zeta \zeta \cdot \xstar_\pi(s,a) - f(\zeta) = \fstar'(\xstar_\pi(s,a)) = \w(s,a)
$$
  where the second equality comes from the fact that $f'(\zetastar_\pi(s,a))=\xstar_\pi(s,a)\Rightarrow \zetastar_\pi(s,a) =\fstar'(\xstar_\pi(s,a))$. 

The final step is to show $\lambdastar_\pi$ is bounded, and there exists an optimal solution $\nustar_\pi$ whose image is in $\Nset$.  
%
%
As discussed, the $x^*\rbr{s, a} = f'(\w(s,a))$ is bounded by $f'\rbr{W_{\max}}$, then, we may use~\eqref{eq:x-lambda} to characterize $\lambdastar_\pi$ as
$$
\lambdastar_\pi = \EE_{\visitpi}\sbr{r\rbr{s,a}} - \alpha\EE_{\visitpi}\sbr{x^*(s,a)} \,\Rightarrow \abr{\lambda}\le C <\infty.
$$
We now consider an optimal solution $\nustar_\pi$, and let $(\hat{s},\hat{a}) = \argmax\nustar(s,a)$ and $(\bar{s},\bar{a}) = \argmin\nustar(s,a)$.\footnote{The analysis here can be adapted to the case where $\max\nustar$ and $\inf\nustar$ are replaced by $\sup\nustar$ and $\inf\nustar$, respectively.}  As the set of $\nustar_\pi$ is offset-invariant, we assume without loss of generality that $\nustar(\bar{s},\bar{a})=0$.
Consider a trajectory starting from $(\hat{s},\hat{a})$ and controlled by $\pi$, $(\hat{s}_0,\hat{a}_0,\hat{s}_1,\hat{a}_1,\ldots)$, on which repeated applications of $\bellman$ yield
\[
\nustar_\pi(\hat{s},\hat{a}) = \EE_{\hat{s}_0=\hat{s},\hat{a}_0=\hat{a},\pi}\sbr{\sum_{t<\tmix} \left(r(\hat{s}_t,\hat{a}_t) - \alpha f'(\w(\hat{s}_t,\hat{a}_t)) - \lambdastar_\pi\right) + \nustar_\pi(\hat{s}_{\tmix},\hat{a}_{\tmix})}\,.
\]
We may obtain a similar recurrence for a trajectory, $(\bar{s}_0,\bar{a}_0,\bar{s}_1,\bar{a}_1,\ldots)$, starting from $(\bar{s},\bar{a})$. Subtracting them on both sides, we have
\begin{eqnarray}
\lefteqn{\nustar_\pi(\hat{s},\hat{a}) - \nustar_\pi(\bar{s},\bar{a})
 = \EE_{\hat{s}_0=\hat{s},\hat{a}_0=\hat{a},\pi}\sbr{\sum_{t<\tmix} \left(r(\hat{s}_t,\hat{a}_t) - \alpha f'(\w(\hat{s}_t,\hat{a}_t))\right)}} \nonumber \\
 & & - \EE_{\bar{s}_0=\bar{s},\bar{a}_0=\bar{a},\pi}\sbr{\sum_{t<\tmix} \left(r(\bar{s}_t,\bar{a}_t) - \alpha f'(\w(\bar{s}_t,\bar{a}_t))\right)} \nonumber \\
 & & + \left(\EE_{\hat{s}_0=\hat{s},\hat{a}_0=\hat{a},\pi}\sbr{\nustar_\pi(\hat{s}_{\tmix},\hat{a}_{\tmix})} - \EE_{\bar{s}_0=\bar{s},\bar{a}_0=\bar{a},\pi}\sbr{\nustar_\pi(\bar{s}_{\tmix},\bar{a}_{\tmix})}\right)\,. \label{eqn:span-1}
\end{eqnarray}
Consider the last term above.  By the definition of $\tmix$ and nonnegativity of $\nustar_\pi$, we have
\begin{eqnarray*}
\lefteqn{\EE_{\hat{s}_0=\hat{s},\hat{a}_0=\hat{a},\pi}\sbr{\nustar_\pi(\hat{s}_{\tmix},\hat{a}_{\tmix})} - \EE_{\bar{s}_0=\bar{s},\bar{a}_0=\bar{a},\pi}\sbr{\nustar_\pi(\bar{s}_{\tmix},\bar{a}_{\tmix})}} \\
 &\le& \tvar{\delta_{\hat{s},\hat{a}} T_\pi^{\tmix} - \delta_{\bar{s},\bar{a}} T_\pi^{\tmix}} \,\, \nustarmax \\
 &\le& (\tvar{\delta_{\hat{s},\hat{a}} T_\pi^{\tmix} - \visitpi} + \tvar{\delta_{\bar{s},\bar{a}} T_\pi^{\tmix} - \visitpi}) \nustarmax = \nustarmax/2\,. 
\end{eqnarray*}
Plugging the above in \eqref{eqn:span-1} and realizing $\nustar_\pi(\bar{s},\bar{a})=0$, we obtain
\[
\nustarmax/2 \le \tmix(2\Rmax + |\alpha|(f'(\Wmax)-f'(0))) = C \tmix\,.
\]
Hence, $\nustar$ is in the range $[0,2C\tmix]$.  Then, by the offset-invariance property, we can shift $\nustar$ by the constant $-C\tmix$, so that its range is now in $\Nset=[-C\tmix,C\tmix]$.
%
%
\end{proof}

\end{appendix}

\section{Experiment Details}

\label{app:details}
For the Four Rooms quantitative results, we used $100$ trajectories of length $100$.
For online training, these trajectories were sampled anew during each iteration.
For offline training, the trajectories were sampled once, using the behavior policy for the GridWalk environment from~\citet{dualdice}, and this dataset was fixed throughout training.
We implemented actor-critic analogous to the tabular version of~\estabb. 
At each training iteration, we first solve for $\qpi$ in closed form using standard matrix operations, and subsequently we take one gradient step for $\pi$ to maximize $\E_{s\sim \Dset}[\sum_a \pi(a|s) \qpi(s,a)]$.
The discount factor was $0.99$.

The table below gives hyperparameters used in the continuous control experiments.
Many of our settings for~\estabb were taken from~\citet{HaaZhoAbbLev18} and~\citet{fujimoto2018addressing}.
We set the regularization coefficient $\alpha$ in~\estabb to $0.01$. 

\begin{center}
\begin{tabular}{ c || c | c | c | c}
 Hyperparameter & DDPG & TD3 & \estabb & SAC \\ 
 \hline
 Policy network size & 400-300 & 400-300 & 256-256 & 256-256 \\  
 Critic network size & 400-300 & 400-300 & 256-256 & 256-256 \\  
Actor learning rate & $10^{-3}$ & $10^{-3}$ & $10^{-3}$ & $10^{-3}$ \\
Critic learning rate & $10^{-3}$ & $10^{-3}$ & $10^{-3}$ & $10^{-3}$ \\
Batch size & 256 & 100 & 256 & 256 \\
Critic updates per time step & 1 & 1 & 1 & 1\\
Critic updates per actor update & 1 & 2 & 2 & 2\\
\end{tabular}
\end{center}

\section{Additional Results}
\label{app:more-results}

\begin{figure}[H]
  \begin{center}
  \includegraphics[width=0.99\textwidth]{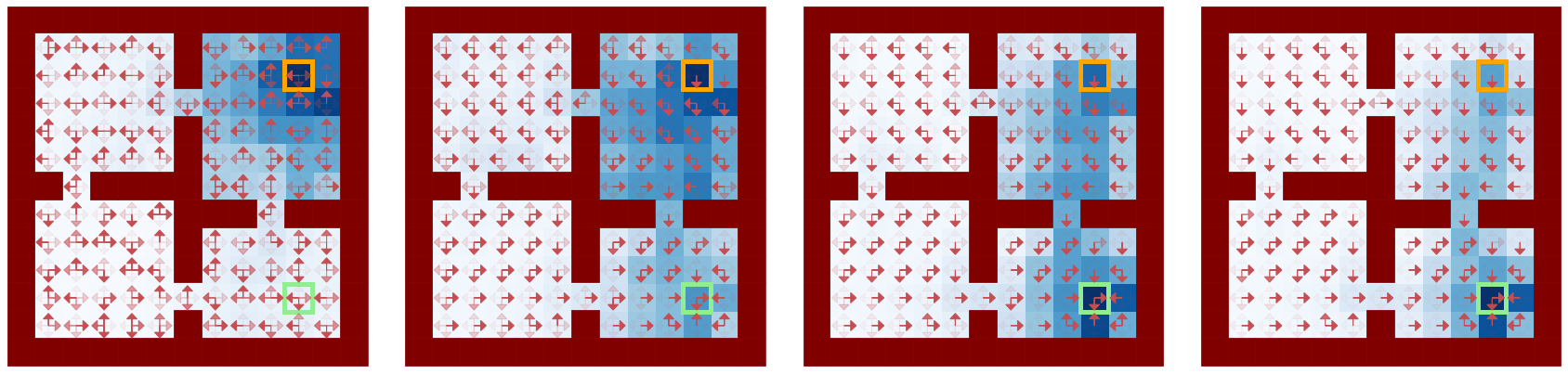} \\
  \includegraphics[width=0.99\textwidth]{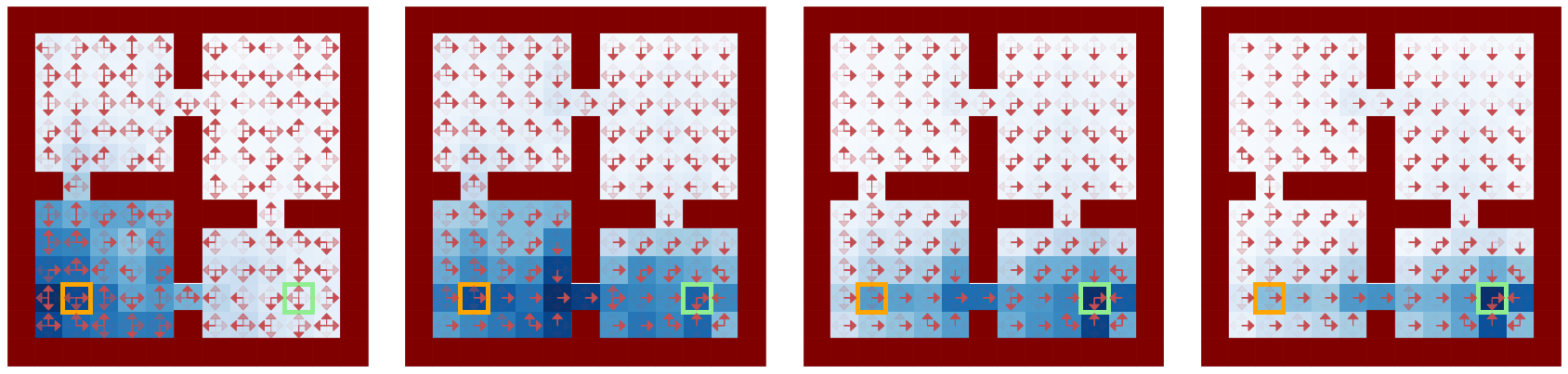}
  \end{center}
  \caption{Results on the Four Rooms domain with other initial states.
  }
  \label{fig:more-fourrooms}
\end{figure}

\begin{figure}[H]
  \setlength{\tabcolsep}{0pt}
  \renewcommand{\arraystretch}{0.7}
  \begin{center}
  	\begin{tabular}{ccc}
      \tiny HalfCheetah & \tiny Hopper & \tiny Walker2d \\
    \includegraphics[width=0.27\textwidth]{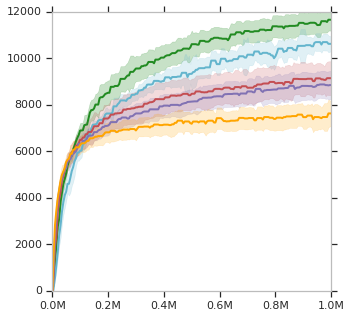} &
    \includegraphics[width=0.27\textwidth]{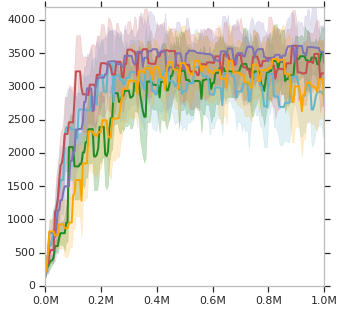} &
    \includegraphics[width=0.27\textwidth]{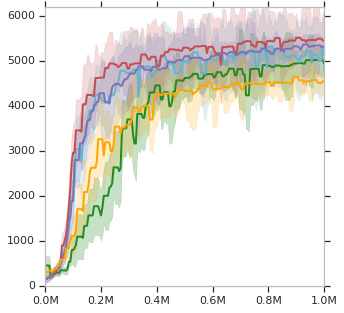}
    \end{tabular}
  	\begin{tabular}{cc}
      \tiny Ant & \tiny Humanoid \\
    \includegraphics[width=0.27\textwidth]{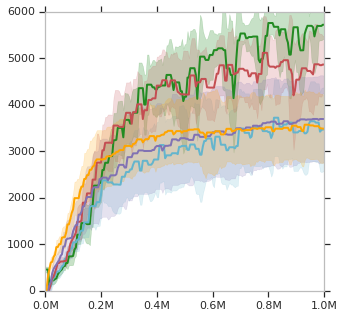} &
    \includegraphics[width=0.27\textwidth]{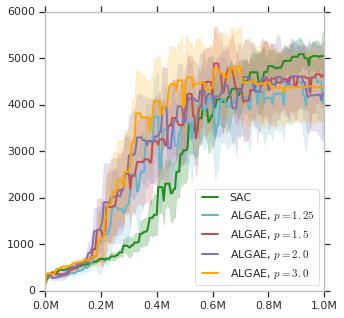}
    \end{tabular}
  \end{center}
  \caption{We show the results of~\estabb over the choice of function $\fstar(x)=\frac{1}{p}|x|^p$. In~\estabb runs, we use $\alpha=1$ and perform 4 training steps per environment step (as opposed to $\alpha=0.01$ and 1 training step per environment step used in the main text). We see that different values of $p$ lead to slightly different results. Interestingly, we find $p=1.5$ to typically perform the best, similar to the findings in~\citet{dualdice}.
  }
  \label{fig:more-results}
\end{figure}

\end{document}